\documentclass{article}

\usepackage{arxiv}
\usepackage[authoryear,round]{natbib}

\usepackage{amsmath,amsfonts,bm}

\def\1{\bm{1}}

\DeclareMathAlphabet{\mathsfit}{\encodingdefault}{\sfdefault}{m}{sl}
\SetMathAlphabet{\mathsfit}{bold}{\encodingdefault}{\sfdefault}{bx}{n}

\def\gL{{\mathcal{L}}}

\usepackage{amsmath,amssymb,amsthm,array,booktabs,graphicx,microtype,multirow,url,xcolor}
\usepackage{colortbl}
\usepackage{algorithm,algorithmic}
\usepackage{wrapfig}
\usepackage{tikz}
\usetikzlibrary{arrows.meta,backgrounds,fit,positioning}
\usepackage[section]{placeins}
\usepackage{caption}
\usepackage{hyperref}
\usepackage{enumitem}

\DeclareMathSizes{6.7}{6.7}{5}{5}
\DeclareMathSizes{6.4}{6.4}{5}{5}

\newcolumntype{L}[1]{>{\raggedright\arraybackslash}p{#1}}

\newtheorem{theorem}{Theorem}

\newtheorem{assumption}{Assumption}

\title{Learning to Refer: Inference-Time Collaboration in Decentralized Federated Learning}

\author{
Yuheng Ma$^{1}$\thanks{Authors are listed in alphabetical order by surname.} \qquad Chendi Wang$^{2}$ \qquad Qiong Zhang$^{3}$\thanks{Corresponding author: Qiong Zhang (\texttt{qiong.zhang@mbzuai.ac.ae}).}\\
$^{1}$KLATASDS-MOE, School of Statistics, East China Normal University\\
$^{2}$Department of Statistics and Data Science, Xiamen University\\
$^{3}$Division of Computing and Mathematical Sciences, MBZUAI
}

\date{}
\renewcommand{\headeright}{Preprint}
\renewcommand{\undertitle}{Preprint}
\renewcommand{\shorttitle}{Learning to Refer}
\hypersetup{
pdftitle={Learning to Refer: Inference-Time Collaboration in Decentralized Federated Learning},
pdfauthor={Yuheng Ma, Chendi Wang, Qiong Zhang}
}

\begin{document}

\maketitle

\begin{abstract}
In decentralized federated learning, heterogeneous clients may develop complementary expertise, yet each query is typically evaluated only by the client where it arrives.
To exploit this distributed expertise without global coordination, we formulate inference-time collaboration as budgeted local expert referral, a graph-constrained routing process in which a query can move only between neighboring clients and only visited clients contribute to its final prediction.
\texttt{EdgeDRM} implements this framework by learning a density-based routing potential for each client: differences between neighboring potentials provide the comparisons used to select successive referrals, while local task models generate predictions that are aggregated along the resulting route.
Both training and inference are decentralized, using only neighbor-local information without a central router or graph-wide client scoring.
We establish estimation guarantees for the learned comparisons and characterize how local improvement and graph structure govern prior-weighted density gains from referral.
Experiments on benchmark and real-world medical data show that \texttt{EdgeDRM} outperforms graph-matched decentralized baselines in most settings, with the clearest benefits under sparse connectivity.
\end{abstract}

\section{Introduction}

Federated learning (FL) enables multiple clients to train a shared model without centralizing their raw data.
Conventionally, a server distributes the model and aggregates locally computed updates~\citep{mcmahan2017communication}.
In decentralized FL~\citep{lalitha2019peer}, clients form a communication graph whose nodes represent clients and whose edges specify which pairs can exchange model information directly, with no central server coordinating the network.
The graph determines which clients can collaborate during training, but at inference a query is typically evaluated only by the client where it arrives.
This inference limitation becomes consequential when client data distributions differ, a setting known as statistical heterogeneity.
Consequently, a model may perform well on queries from its local distribution but poorly on those from other clients.

Such variation in local performance creates an opportunity for inference-time collaboration: a query that is poorly handled by its entry client may be better handled by another client.
For instance, a patient may first visit a local hospital that lacks the expertise needed for the case, even though a better-suited specialist is available elsewhere in the hospital network.
Reaching that specialist, however, may require a sequence of local referrals because each physician can refer the patient only to nearby or affiliated hospitals, which may in turn refer the patient onward.
Each referral incurs delay and cost, while the referral path determines which specialists can ultimately be reached.
We call the analogous inference process in decentralized FL \emph{local expert referral}: a query enters at an arbitrary client, may traverse only a limited number of communication edges, and receives predictions only from the clients it visits.
Here, a \emph{referral} is one neighbor-to-neighbor transfer, \emph{routing} is the process that selects successive referrals, and the \emph{route} is the resulting sequence of visited clients.
Unlike global expert selection, local expert referral is sequential and path-dependent: both the referral budget and the graph topology restrict which experts a query can reach.
This raises our central question: \emph{how can client specialization be exploited through graph-constrained local referral to improve on the entry client's prediction?}
Figure~\ref{fig:hospital-referral-overview} compares no routing, centralized routing, and local referral.

\begin{figure*}[t]
  \centering
  \vskip -0.1in
  \begin{minipage}[t]{0.3\textwidth}
    \centering
    \includegraphics[width=\linewidth]{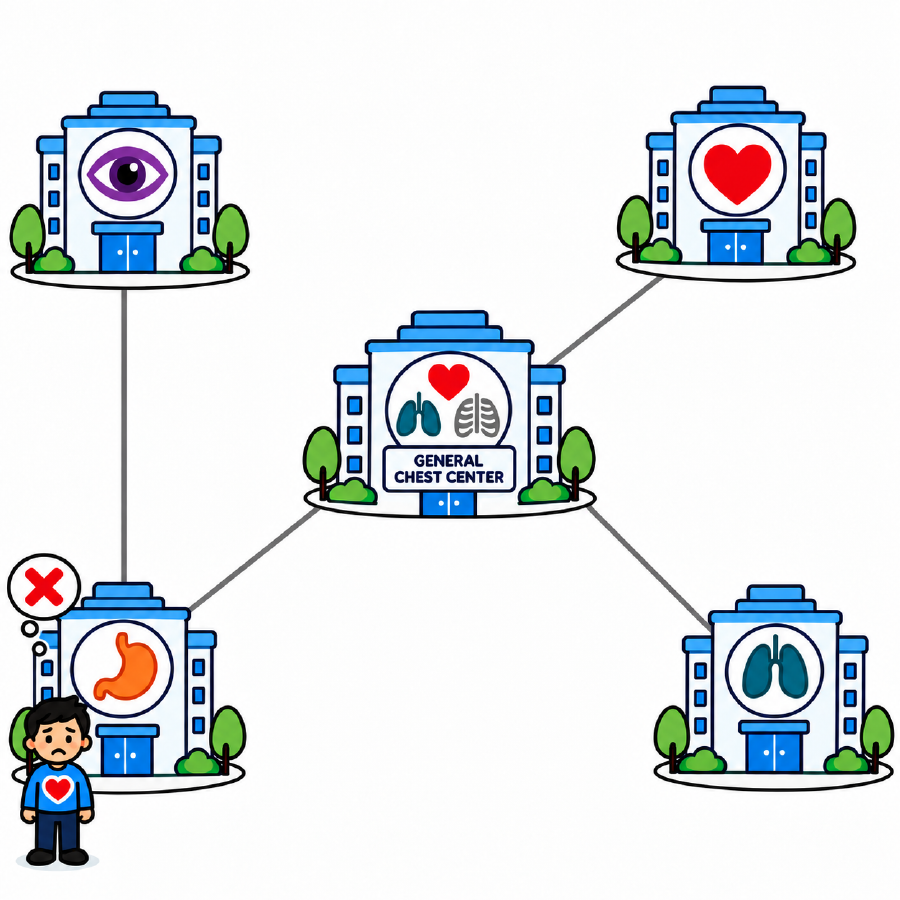}

    {\footnotesize (a) \textbf{No router.}}
  \end{minipage}\hfill
  \begin{minipage}[t]{0.3\textwidth}
    \centering
    \includegraphics[width=\linewidth]{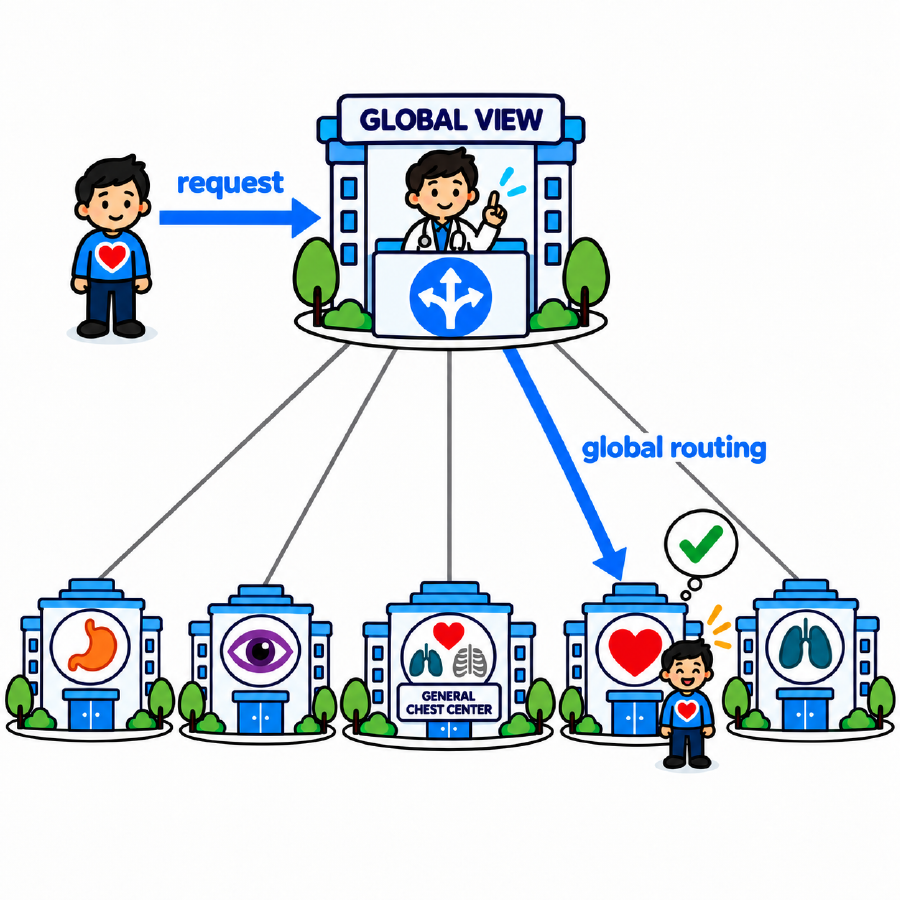}

    {\footnotesize (b) \textbf{Centralized router.}}
  \end{minipage}\hfill
  \begin{minipage}[t]{0.3\textwidth}
    \centering
    \includegraphics[width=\linewidth]{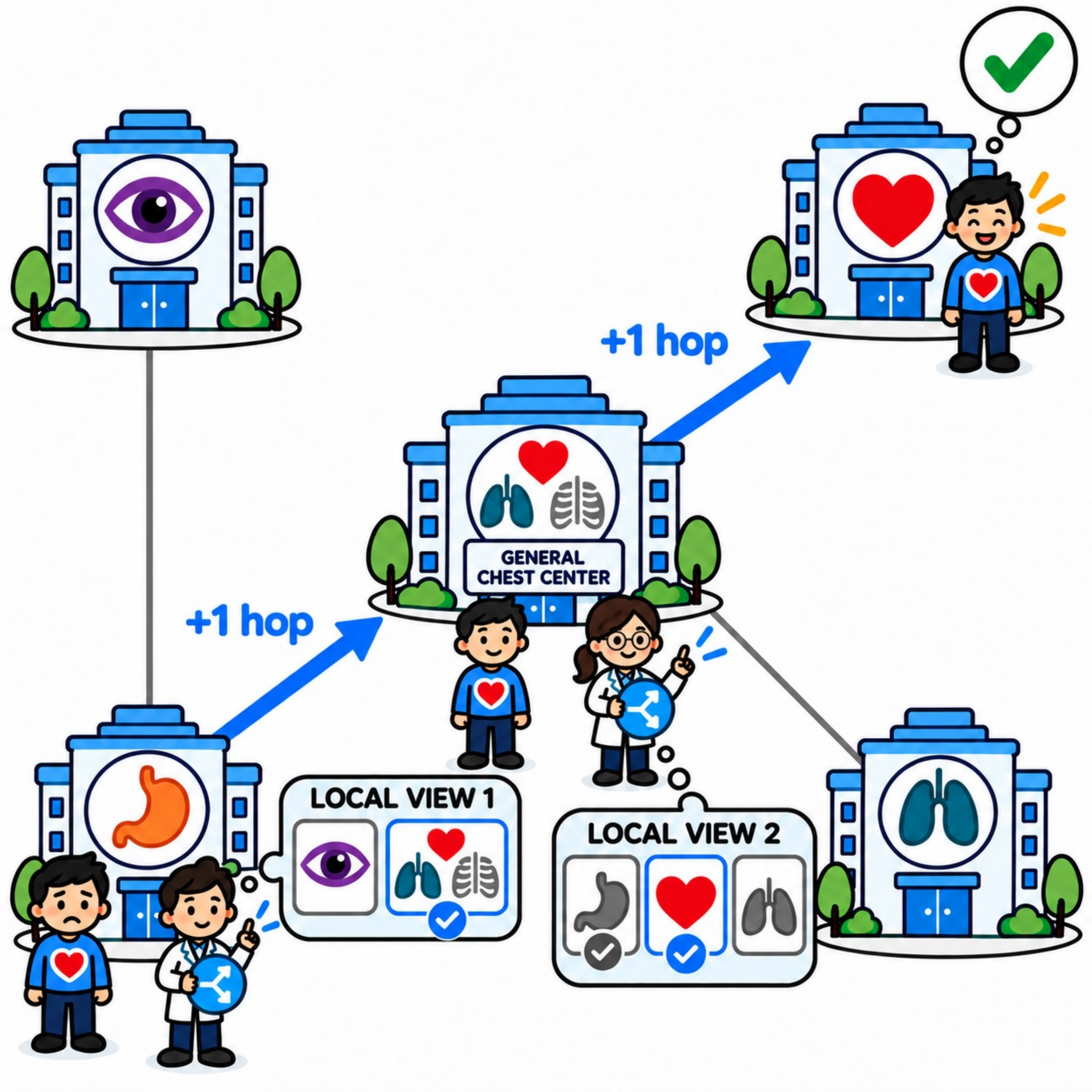}

    {\footnotesize (c) \textbf{Proposed local router.}}
  \end{minipage}
  \vskip -0.05in
  \caption{Hospital analogy for inference-time routing. 
  Gray lines indicate hospital pairs that can exchange information and know one another's capabilities, for example because they operate in the same region or health system. 
  Without routing, a patient with a heart condition remains at an entry hospital whose expertise lies in gastroenterology (left). 
  A centralized router sees all hospitals and dispatches the patient directly (center), but such global coordination is often infeasible because of administrative boundaries, privacy constraints, and geographic separation. 
  In contrast, \texttt{EdgeDRM} (ours) makes stepwise local referrals (right): the gastroenterology hospital first routes the patient to a general chest center, whose thoracic expertise better matches the condition. The center then identifies the condition as cardiac and routes the patient to a dedicated cardiology hospital. 
  Thus, each hop progressively improves the match between the patient and the hospital's expertise.}
  \label{fig:hospital-referral-overview}
  \vskip -0.1in
\end{figure*}

Existing work addresses only individual parts of this problem.
Personalized and decentralized FL methods determine how clients collaborate during training, but typically leave each test query at the client where it arrives~\citep{sun2023dfedavgm,zantedeschi2020fully,dai2022dispfl,lin2025fedspd,DBLP:conf/nips/LiWSLS25}.
Federated routing and mixture-of-experts methods exploit specialization through global client scores, learned routing gates, or expert subsets, but do not model budgeted neighbor-to-neighbor query movement~\citep{wang2026feddrm,reisser2021fedmix,farhat2023learning}.
Graph-based referral and search methods support local query movement, but learn from observed referral outcomes or graph-wide search supervision rather than from the static datasets held by neighboring FL clients~\citep{khudabukhsh2016distributed,baranchuk2019learning}.
Together, these lines leave open how to learn query--client compatibility from decentralized client data and compose neighbor-local comparisons into an effective prediction path.
See Appendix~\ref{app:related_work} for details.

Addressing this gap requires overcoming two coupled difficulties specific to decentralized routing.
First, query--client comparisons must be learned without the graph-wide normalization used by centralized routers such as \texttt{FedDRM}~\citep{wang2026feddrm}.
Second, neighbor-local comparisons must remain coherent when composed along a multi-hop referral path.
We therefore propose \texttt{EdgeDRM}, a decentralized density-ratio routing framework.
It avoids graph-wide normalization by learning each edge comparison through balanced endpoint source classification.
To make these comparisons composable, \texttt{EdgeDRM} assigns each client one routing potential shared across its incident edges.
Under a common representation, the resulting differences accumulate to the potential difference between the entry client and each reached client.
We also characterize discrepancies caused by different local representations.
At deployment, each client compares only its neighbors using its own query representation and stored neighbor coefficients, requiring neither a global score vector nor predictions from unvisited clients.
This work makes four contributions:
\begin{itemize}[leftmargin=*,topsep=0pt,parsep=0pt,itemsep=0pt]
  \item We formulate inference-time collaboration in decentralized FL as budgeted local expert referral, in which queries move only between neighbors and only visited clients contribute to prediction.
  \item We develop \texttt{EdgeDRM}, a neighbor-local framework that replaces graph-wide normalization with balanced endpoint source classification.
  One potential per client makes edge comparisons coherent across hops, and history-weighted aggregation uses only visited clients.
  \item We show that, under a shared representation, \texttt{EdgeDRM} learns increasingly accurate neighbor comparisons as training data and optimization steps increase.
  When locally better neighbors consistently guide the route, these comparisons yield cumulative gains in query--client compatibility and lead to the globally best-matched client given sufficient referral budget.
  \item Across benchmark, scaling, tiered-hospital, and preliminary real-world medical evaluations, \texttt{EdgeDRM} outperforms graph-matched decentralized baselines in most settings, with clear gains on sparse graphs but no consistent advantage under complete connectivity.
\end{itemize}

\section{Problem setting, distributional models, and objective}
\label{sec:edgedrm-model}

\textbf{Problem setting.}
Consider a $K$-class classification problem with $m$ clients indexed by $V=[m]=\{1,2,\ldots,m\}$.
The clients communicate over a fixed graph $G=(V,E)$.
An edge $\{i,j\}\in E$ allows clients $i$ and $j$ to exchange model information during training and refer queries directly to one another during inference.
During inference, each client can communicate only with its neighbors in $\mathcal N(i)$ and cannot access the models or scores of non-neighboring clients.
A query $x$ first arrives at an entry client $s$.
A \emph{referral} transfers the query from its current client to one of its neighbors, and successive referrals form a path in $G$.
The \emph{movement budget} $L\in\mathbb N_0$ is an upper bound on the number of such neighbor-to-neighbor transfers after the query arrives.
Thus, the query visits at most $L+1$ clients, including the entry client $s$.

Each client $i$ has a local dataset $D_i=\{(x_{ir},y_{ir})\}_{r=1}^{n_i}$ consisting of samples drawn independently from a client-specific joint distribution $P_i^{X,Y}$.
Let $N=\sum_{i=1}^m n_i$ and let $\rho_i=n_i/N$ denote client $i$'s sample proportion.
For each class $k\in[K]$, let $\pi_{ik}=P_i^Y(\{k\})$ denote the marginal probability of class $k$ at client $i$, write $\pi_i=(\pi_{i1},\ldots,\pi_{iK})$, and let $P_{ik}^X$ denote the conditional feature distribution given $Y=k$.
The feature marginal is then $P_i^X=\sum_{k=1}^K\pi_{ik}P_{ik}^X$.
We allow the clients' distributions to differ, so that $P_i^{X,Y}\neq P_j^{X,Y}$ for some $i\neq j$.

\textbf{Distributional models and conditional objective.}
The joint distribution factorizes as $P_{i}^{X,Y} = P_i^{Y\mid X}P_i^X$.
Let $g_\theta$ denote a shared encoder and let $Z=g_\theta(X)$ be the resulting representation.
We model $P_i^{Y\mid X}$ under the restriction that it depends on $X$ only through $Z$.
Then rather than modeling $P_i^X$ directly, we model its induced representation distribution $P_i^Z$.

\emph{\underline{Conditional label distribution}.}
For each client $i$, we model the conditional label distribution using the shared encoder $g_\theta$ and a private prediction head $\omega_i=(\alpha_i,\beta_i)$:
\begin{equation}
P_i^{Y\mid X}(\{k\}\mid x;\theta,\omega_i)
=\frac{\exp\{\alpha_{ik}+\beta_{ik}^{\top}g_\theta(x)\}}
{\sum_{c=1}^K\exp\{\alpha_{ic}+\beta_{ic}^{\top}g_\theta(x)\}},
\quad i\in V,~k\in[K].
\label{eq:local-predictive-model}
\end{equation}
For identifiability, we set $\sum_{k=1}^K\alpha_{ik}=0$ and $\sum_{k=1}^K\beta_{ik}=0$ for each client $i$.
The shared encoder provides a common representation across clients, while the client-specific heads allow their conditional label distributions to differ.

\emph{\underline{Representation marginal distribution}.}
To share information across clients, we jointly model the representation marginals $\{P_i^Z\}_{i=1}^m$ through a common-reference density-ratio model (DRM), following \texttt{FedDRM}~\citep{wang2026feddrm}.
Specifically, we assume that the measures $\{P_i^Z\}_{i=1}^m$ admit strictly positive Radon--Nikodym derivatives with respect to a common baseline distribution $F$ and satisfy:
\begin{equation}
\label{eq:common-reference-drm}
\varphi_i(z):=\{dP_i^Z/dF\}(z)=\exp\{b_i+\vartheta_i^\top z\},
\qquad i\in V,
\end{equation}
and $\varphi_i$ is the density of $P_i^Z$ relative to $F$ parameterized by $b_i$ and $\vartheta_i$.
For identifiability, we take $F=P_1^Z$ and set $b_1=0$ and $\vartheta_1=0$.
Because all clients share the same baseline distribution, it cancels from every pairwise comparison: $\log\{dP_j^Z/dP_i^Z\}(z)=\log\{\varphi_j(z)/\varphi_i(z)\}=(b_j-b_i)+(\vartheta_j-\vartheta_i)^\top z$.
Each client is associated with one coefficient pair $(b_i,\vartheta_i)$.
For a connected graph, differences across its edges determine these pairs relative to client 1.

\emph{\underline{From empirical likelihood to a conditional objective}.}
The baseline distribution $F$ is unknown, so an empirical-likelihood (EL) construction leaves it unrestricted and assigns masses to the distinct pooled representations $z_{ir}=g_\theta(x_{ir})$.
These masses must normalize both $F$ and every tilted distribution $\varphi_i\,dF$.
To state the connection with source classification, let $C$ denote the source client and define $P_\rho^{C,Z}$ by $P_\rho^C(\{i\})=\rho_i$ and $P_\rho^{Z\mid C}(\cdot\mid i)=P_i^Z$.
Under the normalized DRM, Bayes' rule gives:
\begin{equation*}
s_i(z)
:=P_\rho^{C\mid Z}(\{i\}\mid z)
=\frac{\rho_i\varphi_i(z)}{\sum_{c=1}^m\rho_c\varphi_c(z)}
=\frac{\exp\{\log\rho_i+b_i+\vartheta_i^\top z\}}{\sum_{c=1}^m\exp\{\log\rho_c+b_c+\vartheta_c^\top z\}}.
\end{equation*}
Thus, $s_i(z)$ is the posterior probability that a representation $z$ originates from client $i$.
For fixed representations and finite slopes $\vartheta$, maximizing EL over both the baseline masses and the free intercepts $b_2,\ldots,b_m$ is equivalent, up to a constant independent of these coefficients, to maximizing $\sum_{i,r}\log s_i(z_{ir})$ over the intercepts.
Appendix~\ref{app:edgedrm-derivation} proves this identity, including existence of the intercept optimum and the treatment of repeated representations.
Profiling the masses alone does not give a pointwise equality at arbitrary intercepts.
Motivated by this EL connection, we use the softmax expression as a conditional source model and combine it with the conditional-label model to define
\begin{equation}
\label{eq:loss}
\gL(\theta,\omega,b,\vartheta)=
\underbrace{
\left[-\frac{1}{N}\sum_{i=1}^m\sum_{r=1}^{n_i}
\log P_i^{Y\mid X}(\{y_{ir}\}\mid x_{ir};\theta,\omega_i)\right]
}_{\text{conditional-label loss}}
+\underbrace{
\left[-\frac{1}{N}\sum_{i=1}^m\sum_{r=1}^{n_i}\log s_i(z_{ir})\right]
}_{\text{source-classification loss}}.
\end{equation}
The conditional-label loss fits client-specific predictors, while the source-classification loss estimates density ratios by identifying each representation's source.
For an edge $\{i,j\}$, the odds $s_j(z)/s_i(z)$ provide the comparison used to evaluate referral from client $i$ to client $j$.
These odds measure distributional compatibility, not prediction accuracy; predictions come from the conditional-label models at visited clients.
Computing the global source-classification loss requires all client scores and a graph-wide normalization, which graph-local communication cannot provide.
This limitation distinguishes decentralized FL from \texttt{FedDRM} and motivates a conditional source comparison on each communication edge.

\section{Decentralized Training and Local Expert Referral}
\label{sec:edgedrm-optimization}
Section~\ref{sec:edgedrm-model} defines the statistical model and its global objective.
We now describe how clients train this model through neighbor communication and use the fitted quantities for referral and prediction.
\subsection{Decentralized training with graph-local source likelihoods}
\label{sec:decentralized-training}
Because the two terms in~\eqref{eq:loss} impose different communication requirements, we train the model in two stages.
Stage I trains the encoders and private prediction heads using the conditional-label loss and neighbor mixing.
Stage II freezes the learned encoders, refines the private prediction heads, and estimates the density-ratio coefficients $(b_i,\vartheta_i)$ from graph-local source likelihoods.

For training, client $i$ forms the local objective $\mathcal L_i$ in~\eqref{eq:edgedrm-local-objective} from its local conditional-label loss (the first term in~\eqref{eq:loss}) and one local source-classification loss for each incident edge, all evaluated on $D_i$.
The graph objective in~\eqref{eq:edgedrm-graph-objective} then averages these client-local objectives with equal weights: $\mathcal L_G=m^{-1}\sum_i\mathcal L_i$.
For the conditional-label component, this replaces the sample-proportion weight $\rho_i=n_i/N$ in~\eqref{eq:loss} by $1/m$.
This equal weighting is the canonical formulation in decentralized optimization and is widely used in decentralized FL, where doubly stochastic neighbor mixing gives each client equal weight~\citep{lalitha2019peer,koloskova2020unified,sun2023dfedavgm}.
This weighting convention differs from the centralized setting. 

\textbf{Stage I: conditional-label model training.}
The model in Section~\ref{sec:edgedrm-model} uses a shared encoder parameter $\theta$, but decentralized optimization requires each client to maintain a local copy $\theta_i$.
For compactness, let $\psi_i=(\theta_i,\omega_i)$ collect the local encoder and prediction-head parameters at client $i$.
Stage I runs for $R_{\rm sh}$ communication rounds.
At round $r$, client $i$ starts from $\psi_i^{(r)}$ and takes $B_{\rm sh}$ SGD steps on its empirical conditional-label loss using $D_i$.
Let $\widetilde\psi_i^{(r)}$ denote the resulting encoder and prediction-head parameters.
The graph $G$ determines which pairs of clients can exchange these intermediate parameters.
Because no central server aggregates all client updates, each client instead combines its own intermediate parameters with those received from its neighbors.
A mixing matrix $W$ determines the weights in this local aggregation.
Specifically, let $W$ be symmetric and doubly stochastic, with $W_{ij}=0$ whenever $j\notin\{i\}\cup\mathcal N(i)$.
Client $i$ then updates its parameters as
\begin{equation}
\psi_i^{(r+1)}
=\sum_{j\in\{i\}\cup\mathcal N(i)}W_{ij}\widetilde\psi_j^{(r)}.
\label{eq:generic-decentralized-update}
\end{equation}
For each edge $\{i,j\}\in E$, we use the Metropolis weight $W_{ij}=1/(1+\max\{\deg(i),\deg(j)\})$ and set $W_{ii}=1-\sum_{j\ne i}W_{ij}$.
Neighbor mixing is an established decentralized optimization mechanism~\citep{koloskova2020unified}, but finitely many rounds need not produce exact consensus, so the learned encoders may differ across clients.

\textbf{Stage II: frozen-encoder density-ratio fitting.}
Stage I uses local SGD and neighbor mixing to optimize the client-averaged conditional-label objective described above.
Stage II addresses the remaining source-classification loss, which learns the density ratios by predicting the source client $C$ from a representation $Z$.
In centralized FL, a server can coordinate the scores of all $m$ clients required to evaluate the source probabilities $s_i(z)$.
In the decentralized setting, however, each client communicates only with its neighbors and therefore cannot evaluate the graph-wide normalization in $s_i(z)$.
Because each referral traverses one edge, deciding whether to move the query requires only a comparison between the two endpoints.
We therefore replace the global $m$-client source-classification problem with a binary source-classification problem on each edge.

For each edge $e=\{i,j\}\in E$ with $i<j$, we form a balanced binary source-classification problem by assigning equal weight to the two endpoint mean losses.
Its posterior probability that $z$ comes from client $j$ rather than client $i$ is $q_{ij}^{\rm bal}(z):=\varphi_j(z)/\{\varphi_i(z)+\varphi_j(z)\}$, with
\begin{equation}
\label{eq:logitqij}
\operatorname{logit}q_{ij}^{\rm bal}(z)=(b_j-b_i)+(\vartheta_j-\vartheta_i)^\top z.
\end{equation}
This binary problem is the source-classification term in~\eqref{eq:loss} restricted to the endpoints $\{i,j\}$, and it learns the coefficient differences needed to compare them.
Because client $i$ uses the same coefficient pair $(b_i,\vartheta_i)$ on all its incident edges, the graph links these pairwise comparisons through their shared endpoints.
The two endpoint losses receive equal weight, irrespective of $n_i$ and $n_j$. Thus, the sample-count prior is not included in the training logit.
The exact graph-local objective is given in Appendix~\ref{app:weighted-graph-objective}.
After Stage I, clients freeze their encoders and fit their private heads and density-ratio coefficients using local conditional-label and incident-edge source losses.
Only sample counts, density-ratio coefficients, and their gradient contributions are exchanged across edges.
Encoders, private heads, and raw data remain local.
These fitted quantities drive referral and prediction.

\subsection{Budgeted local expert referral \& Prediction using the referral history}
\label{sec:referral-history}
\textbf{Budgeted local expert referral.}
For a query $x$, let $\Delta_{ij}(x):=\log(\rho_j/\rho_i)+\operatorname{logit}q_{ij}^{\rm bal}(g_\theta(x))$ denote the population pairwise log-odds obtained by adding the sample-count prior to the learned balanced comparison.
This quantity provides the criterion for referring the query from client $i$ to client $j$.
At inference, client $i$ substitutes the fitted density-ratio coefficients and its own fitted representation into this quantity, giving the plug-in estimator
\begin{equation}
\widehat\Delta_{ij}(x)
:=\log(\rho_j/\rho_i)+(\widehat b_j-\widehat b_i)
+(\widehat\vartheta_j-\widehat\vartheta_i)^\top g_{\widehat\theta_i}(x).
\label{eq:fitted-edge-score}
\end{equation}
Hence training estimates the density-ratio term without sample-count weighting, and inference adds $\log(\rho_j/\rho_i)=\log(n_j/n_i)$ exactly once.
Applying these fitted endpoint comparisons sequentially produces a query-dependent route through the graph.
For a query $x$ entering at client $s$ under movement budget $L$, the realized route is
\begin{equation}
\label{eq:budgeted-route}
\mathcal R_L(x,s)=(\widehat s_0=s,\widehat s_1,\ldots,\widehat s_T),
\quad T\le L,
\quad \{\widehat s_{t-1},\widehat s_t\}\in E.
\end{equation}
\begin{wrapfigure}{r}{0.48\textwidth}
\vspace{-\baselineskip}
\centering
\footnotesize
\begin{minipage}{\linewidth}
\hrule
\vspace{0.4ex}
\refstepcounter{algorithm}
\textbf{Algorithm~\thealgorithm: Budgeted local expert referral.}
\label{alg:referral}
\par\vspace{0.4ex}
\hrule
\begin{algorithmic}[1]
\STATE {\bfseries Input:} Query $x$, entry client $s$, threshold $\eta\ge0$, and movement budget $L$.
\STATE $\widehat s_0\leftarrow s$, $V_{\rm vis}\leftarrow\{s\}$, $T\leftarrow0$
\WHILE{$T<L$}
  \STATE $U\leftarrow\mathcal N(\widehat s_T)\setminus V_{\rm vis}$
  \STATE \textbf{if} $U=\varnothing$ \textbf{then break}
  \STATE $j^\star\leftarrow\arg\max_{j\in U}\widehat\Delta_{\widehat s_T,j}(x)$
  \STATE \textbf{if} $\widehat\Delta_{\widehat s_T,j^\star}(x)\le\eta$ \textbf{then break}
  \STATE Send the packet and accepted score to $j^\star$
  \STATE $T\leftarrow T+1$, $\widehat s_T\leftarrow j^\star$
  \STATE $V_{\rm vis}\leftarrow V_{\rm vis}\cup\{j^\star\}$
\ENDWHILE
\STATE {\bfseries Output:} $(\widehat s_0,\ldots,\widehat s_T)$ and the accepted scores.
\end{algorithmic}
\hrule
\end{minipage}
\vspace{-1.5\baselineskip}
\end{wrapfigure}
The route construction is detailed in Algorithm~\ref{alg:referral}.
We describe its intuition below.
During referral, $x$ is passed between neighboring clients together with the visited-client set, accepted edge scores, and remaining movement budget.
At client $\widehat s_t$, the router scores each unvisited neighbor using~\eqref{eq:fitted-edge-score} and selects the largest score, with ties broken deterministically.
The query moves to the selected neighbor only if this score exceeds $\eta$ and otherwise terminates.
Evaluating a candidate requires only the current client's representation of $x$ and the stored sample counts and density-ratio coefficients of the two endpoints.
The candidate need not inspect the query.
The procedure uses neither the query label nor information about prediction correctness.
It returns the visited route and accepted edge scores for the final prediction rule.

\textbf{Prediction using the referral history.}
Because referral scores reflect distributional compatibility rather than prediction accuracy directly, earlier visited clients may still provide useful complementary predictions.
We therefore aggregate the predictions from all clients on $\mathcal R_L(x,s)$.
Denote the fitted probability vector at client $i$ as $f_i(x):=\left( P_i^{Y\mid X}(\{k\}\mid x;\widehat\theta_i,\widehat\omega_i) \right)_{k=1}^K$.
For each visited client $\widehat s_t$, let $u_0(x,s)=0$ and $u_t(x,s)=\sum_{\ell=1}^t\widehat\Delta_{\widehat s_{\ell-1},\widehat s_\ell}(x)$ denote the cumulative accepted edge score from the entry client to $\widehat s_t$ for $t=1,\ldots,T$.
We set $(w_0(x,s),\ldots,w_T(x,s)):=\operatorname{softmax}(u_0(x,s),\ldots,u_T(x,s))$.
The final prediction is:
\begin{equation*}
f_{\rm mix}(x,s)
=\sum_{t=0}^T w_t(x,s)f_{\widehat s_t}(x),
\qquad
\widehat y=\arg\max_{k\in[K]}f_{{\rm mix},k}(x,s).
\end{equation*}
\section{Theoretical Results}
\label{sec:theory}

We establish theoretical guarantees for coefficient estimation and referral gain in \texttt{EdgeDRM}.
Our results characterize the accuracy and asymptotic normality of parameter estimation (Section~\ref{sec:score-error-decomposition}) and referral gains (Section~\ref{sec:structural-gain-settings}).
Proofs and additional results are given in Appendix~\ref{app:proofs}.

\subsection{How accurately can we estimate the parameters?}
\label{sec:score-error-decomposition}

We first study the performance of parameter estimation in stage II.
To separate coefficient estimation from representation learning, consider a common representation $g_\theta$ from Section~\ref{sec:edgedrm-model}, fixed independently of the coefficient-fitting samples.
Let $\zeta$ collect the free density-ratio coefficients $(\vartheta_i,b_i)$, $i\ne1$, and private-head parameters $(\alpha_i,\beta_i)$ under the normalizations in Section~\ref{sec:edgedrm-model}.

We analyze $R$ updates of Stage-II optimization on $\mathcal L_G$ in \eqref{eq:edgedrm-graph-objective}, holding $(\vartheta_1,b_1)=0$ fixed and using unbiased mini-batch gradients with step sizes $\eta_r=2/\{\mu(r+r_0)\}$ for $r=0,\ldots,R-1$.
Here $0<\mu\le1$ is a curvature constant and $r_0>0$ is a fixed offset.
We refer to Appendix~\ref{app:parameter-proof} for further details on conditions on these constants.
The following theorem bounds the statistical and optimization errors of the Stage-II output and establishes its asymptotic normality.

\begin{theorem}
\label{thm:parameter-estimation}
Let $\zeta^\star$ denote the true coefficient vector of the models in Section~\ref{sec:edgedrm-model}, and suppose $n_i/N\to\rho_i^*>0$.
Assume the coefficients are identifiable and assume the population Hessian at $\zeta^\star$ is positive definite and, with probability tending to one, $\mu I\preceq\nabla^2\mathcal L_G(\zeta)\preceq\mu^{-1}I$,
throughout a convex region containing a finite empirical minimizer and the optimization trajectory, for a constant $0<\mu\le1$ independent of $N$.
Under the remaining regularity conditions in Appendix~\ref{app:parameter-proof}, the Stage-II output $\zeta^{(R)}$ from the specified SGD procedure satisfies, as $N,R\to\infty$,
\begin{equation}
\|\zeta^{(R)}-\zeta^\star\|_2
=O_p\!\left(N^{-1/2}+R^{-1/2}\right).
\label{eq:coefficient-total-error}
\end{equation}
If, in addition, $R/N\to\infty$, then for some $\Sigma_G$ specified in Appendix~\ref{app:parameter-proof},
\begin{equation}
\sqrt N\{\zeta^{(R)}-\zeta^\star\}
\xrightarrow{d}\mathcal N(0,\Sigma_G),
\label{eq:algorithm-clt}
\end{equation}
\end{theorem}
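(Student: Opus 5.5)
The argument rests on the triangle inequality $\|\zeta^{(R)}-\zeta^\star\|_2\le\|\zeta^{(R)}-\widehat\zeta\|_2+\|\widehat\zeta-\zeta^\star\|_2$. Here $\widehat\zeta$ is the finite empirical minimizer of $\mathcal L_G$ inside the convex region of the hypothesis; with the representation frozen, $\mathcal L_G$ is a smooth function of $\zeta$ alone. The first term is the optimization error and the second is the statistical error. I would bound each separately and then combine them for the CLT.

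\textbf{Statistical part.} With $g_\theta$ fixed independently of the fitting samples, $\mathcal L_G$ is an average of independent log-likelihood terms. These are the multinomial logistic losses for labels and the balanced binary logistic losses on each incident edge, with endpoint weights $1/(m n_i)$. As $n_i/N\to\rho_i^*$, $\mathcal L_G$ converges uniformly on compacts to a population objective $\bar{\mathcal L}_G$. The first step is to check that $\zeta^\star$ minimizes $\bar{\mathcal L}_G$. For each edge, the balanced binary loss is minimized at the true logit $(b_j-b_i)+(\vartheta_j-\vartheta_i)^\top z$, because equal endpoint weighting makes the population posterior exactly $q^{\rm bal}_{ij}$; similarly, the label loss is minimized at the true head. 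Since $\zeta^\star$ minimizes every summand simultaneously, it minimizes the sum. Identifiability and the positive-definite Hessian give uniqueness and consistency of $\widehat\zeta$. A Taylor expansion of $\nabla\mathcal L_G(\widehat\zeta)=0$ around $\zeta^\star$ gives
\[
\sqrt N(\widehat\zeta-\zeta^\star)=-H^{-1}\sqrt N\,\nabla\mathcal L_G(\zeta^\star)+o_p(1).
\]
Here $\nabla\mathcal L_G(\zeta^\star)$ is a sum over clients of independent mean-zero averages, each of which is asymptotically normal by the Lindeberg CLT. This yields $\sqrt N(\widehat\zeta-\zeta^\star)\to\mathcal N(0,\Sigma_G)$ with sandwich form $\Sigma_G=H^{-1}JH^{-1}$. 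The matrix $J$ is block-structured, with client-wise variances scaled by $(m\rho_i^*)^{-2}\rho_i^*$ from the $1/m$ weighting. It is not the inverse Fisher information, because the balanced edge losses are not the full likelihood. In particular, this gives $\|\widehat\zeta-\zeta^\star\|=O_p(N^{-1/2})$.

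\textbf{Optimization part.} On the event, of probability tending to one, where $\mu I\preceq\nabla^2\mathcal L_G\preceq\mu^{-1}I$ along the trajectory, $\mathcal L_G$ is $\mu$-strongly convex and $\mu^{-1}$-smooth. For this step I condition on the data. The mini-batch gradients are unbiased for $\nabla\mathcal L_G$, with conditional second moment bounded by $\sigma^2$, a quantity that is $O_p(1)$ by the regularity conditions. The standard recursion
\[
\mathbb E\|\zeta^{(r+1)}-\widehat\zeta\|^2\le(1-\mu\eta_r)\mathbb E\|\zeta^{(r)}-\widehat\zeta\|^2+\eta_r^2\sigma^2
\]
with $\eta_r=2/\{\mu(r+r_0)\}$ yields, by induction, $\mathbb E\|\zeta^{(R)}-\widehat\zeta\|^2\le C/(R+r_0)$. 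The condition on $r_0$, that $\eta_0\le\mu$ or something comparable, ensures the contraction is valid from the first step. Markov's inequality then gives $O_p(R^{-1/2})$, and combining the two parts yields \eqref{eq:coefficient-total-error}. If $R/N\to\infty$, then $\sqrt N\|\zeta^{(R)}-\widehat\zeta\|=O_p(\sqrt{N/R})=o_p(1)$, and Slutsky's lemma transfers the CLT from $\widehat\zeta$ to $\zeta^{(R)}$, giving \eqref{eq:algorithm-clt}.

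\textbf{Main obstacle.} I expect the difficulty to be making the optimization bound hold in probability uniformly. Three issues interact. The curvature event and the gradient-noise bound are random. The trajectory must remain in the convex region. The conditional expectation bound must be converted into an unconditional $O_p$ statement. I would handle this by intersecting with the high-probability event, treating the SGD iterates as stopped upon exit from the region, and noting that the hypothesis asserts exit has vanishing probability. I would then apply conditional Markov's inequality and integrate. A secondary check is that the population minimizer under balanced weighting is exactly $\zeta^\star$, which the edge-wise argument above addresses.
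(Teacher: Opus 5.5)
Your plan follows the paper's proof essentially step for step: identify $\zeta^\star$ as the minimizer of the balanced population risk summand by summand, derive a sandwich CLT for the empirical minimizer $\widehat\zeta$, bound the SGD error by the recursion with $\eta_r=2/\{\mu(r+r_0)\}$ and $r_0\ge 2/\mu^2$, stopped at the first exit from the convex region, and finish with the triangle inequality and Slutsky.

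One step, however, is wrong as written. You describe $\nabla\mathcal L_G(\zeta^\star)$ as a sum of independent per-client averages that are each mean zero and each asymptotically normal. Write $\mathcal S_i(O)$ for the gradient at $\zeta^\star$ of client $i$'s per-observation contribution to $\mathcal L_G$ (task term plus all incident-edge terms), so that $\nabla\mathcal L_G(\zeta^\star)=\sum_i n_i^{-1}\sum_r\mathcal S_i(O_{ir})$, and let $\overline{\mathcal S}_i$ be its $P_i$-mean. For the balanced edge losses, $\overline{\mathcal S}_i$ is generally nonzero. An observation from endpoint $i$ of an edge $\{i,j\}$ with $i<j$ contributes a positive multiple of $q_{ij}^{\rm bal}(Z)\chi_e(Z)$, where $\chi_e$ is the gradient of the affine logit. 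The $b_j$-coordinate of its mean is therefore a positive multiple of $\mathbb E_i[q_{ij}^{\rm bal}(Z)]>0$. Only the equally weighted pair of endpoint means cancels, $\tfrac12\mathbb E_i[q_{ij}^{\rm bal}\chi_e]+\tfrac12\mathbb E_j[(q_{ij}^{\rm bal}-1)\chi_e]=0$, so only $\sum_i\overline{\mathcal S}_i=0$ holds. Taken literally, $\sqrt N$ times a single client's score average diverges, so your per-client CLT claim fails.

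The repair comes from your own population step, since $\nabla\bar{\mathcal L}_G(\zeta^\star)=\sum_i\overline{\mathcal S}_i=0$. Rewrite $\sqrt N\nabla\mathcal L_G(\zeta^\star)=\sum_i(\sqrt N/n_i)\sum_r\{\mathcal S_i(O_{ir})-\overline{\mathcal S}_i\}$ and apply the i.i.d.\ CLT to the within-client centered scores. Accordingly, $J=\sum_i(\rho_i^*)^{-1}\Omega_i$ must be built from the centered covariances $\Omega_i=\operatorname{Var}_i\{\mathcal S_i(O)\}$. It cannot use the raw second moments $\mathbb E_i[\mathcal S_i\mathcal S_i^\top]$ or any per-client information identity, since these differ from $\Omega_i$ precisely because $\overline{\mathcal S}_i\neq0$. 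Each $\Omega_i$ must also retain the cross-covariances between different incident edges evaluated on the same observation. With this correction your argument coincides with the paper's.
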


The first term in \eqref{eq:coefficient-total-error} is the statistical error from $N$ training samples.
The second is the optimization error after $R$ updates.
Thus, when $R/N\to\infty$, the optimization error is dominated by $N^{-1/2}$, and the output has the asymptotic distribution in \eqref{eq:algorithm-clt}.

We note that, under the stated models, changing client sample proportions does not change the population minimizer $\zeta^\star$.
The proportions need only satisfy $n_i/N\to\rho_i^*>0$, without a specified convergence rate.
The coefficient-estimation bound in Theorem~\ref{thm:parameter-estimation} implies a bound on the edge-comparison errors used for referral.
In particular, for any fixed $x$ with finite $g_\theta(x)$, we obtain
\begin{equation}
\max_{\{i,j\}\in E}|\widehat\Delta_{ij}(x)-\Delta_{ij}(x)|
=O_p\!\left(N^{-1/2}+R^{-1/2}\right).
\label{eq:comparison-error-bound}
\end{equation}
This comparison-error bound yields the referral guarantee in Section~\ref{sec:structural-gain-settings}.
We remark that the result applies to coefficient fitting with a common, independently fixed encoder.

\subsection{How far and how much can referral improve?}
\label{sec:structural-gain-settings}

Fix a query $x$ and entry client $s$, and consider the route $\mathcal R_L(x,s)$ in~\eqref{eq:budgeted-route}.
Using the population coefficients of Section~\ref{sec:edgedrm-model}, write $h_i(x)=\log\rho_i+b_i+\vartheta_i^\top g_\theta(x)$ and $h^\star(x)=\max_{i\in V}h_i(x)$.
Thus, $\Delta_{ij}(x)=h_j(x)-h_i(x)$, and $\Gamma_L(x,s)=h_{\widehat s_T}(x)-h_s(x)$ is the referral gain.
Under the DRM, this gain is the log ratio of the terminal and entry clients' prior-weighted densities.
Our referral guarantees rely on the following intuitive structural properties.

\textbf{Minimum local improvement.}
Assume there is $\gamma>0$ such that every nonoptimal client has a neighbor whose potential is at least $\gamma$ higher:
\begin{equation}
\max_{j\in\mathcal N(i)}\{h_j(x)-h_i(x)\}\ge\gamma
\quad\text{whenever }h_i(x)<h^\star(x).
\label{eq:minimum-ascent}
\end{equation}
Thus, a client with no higher-potential neighbor must be globally optimal.

\textbf{Ascent depth.}
For $\epsilon\ge0$, let $H_\epsilon$ be the longest strictly ascending path length from $s$, with each step choosing a neighbor within $2\epsilon$ of the highest neighbor potential.
Then $H_\epsilon\le m-1$, and under the conditions below, it bounds the number of accepted referrals.

\begin{theorem}
\label{thm:structural-gains}
Under the conditions of Theorem~\ref{thm:parameter-estimation}, use $\zeta^{(R)}$ for referral at a fixed query $x$ with finite $g_\theta(x)$.
For any $\delta\in(0,1)$, there exists a constant $C_\delta>0$, independent of $N$ and $R$, such that the following holds with $\epsilon=C_\delta(N^{-1/2}+R^{-1/2})$.
Suppose \eqref{eq:minimum-ascent} holds, $\eta\ge\epsilon$, and $\gamma>\eta+\epsilon$.
For all sufficiently large $N,R$, with probability at least $1-\delta$, Algorithm~\ref{alg:referral} with movement budget $L\in\mathbb N_0$ makes $T\le\min\{L,H_\epsilon\}$ moves and satisfies
\begin{equation}
\Gamma_L(x,s)\ge(\gamma-2\epsilon)\min\{L,H_\epsilon\}.
\label{eq:unified-referral-gain}
\end{equation}
On the same event, $L\ge H_\epsilon$ guarantees that the route reaches a maximizer.
\end{theorem}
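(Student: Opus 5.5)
The argument is deterministic once we sit on a high-probability event. By the comparison-error bound \eqref{eq:comparison-error-bound}, which follows from Theorem~\ref{thm:parameter-estimation}, for any $\delta\in(0,1)$ we can pick $C_\delta$ so that for all sufficiently large $N,R$ the event
\[
\mathcal E=\Bigl\{\max_{\{i,j\}\in E}|\widehat\Delta_{ij}(x)-\Delta_{ij}(x)|\le\epsilon\Bigr\},\qquad \epsilon=C_\delta(N^{-1/2}+R^{-1/2}),
\]
has probability at least $1-\delta$. This uses only that $x$ is fixed and $E$ is finite, so a union bound over edges with $O_p$ tightness suffices. Everything below is carried out on $\mathcal E$.

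\textbf{Step 1: accepted moves increase the potential.} On $\mathcal E$, every accepted move $\widehat s_{t-1}\to\widehat s_t$ satisfies $\widehat\Delta>\eta\ge\epsilon$. Hence $\Delta_{\widehat s_{t-1},\widehat s_t}(x)>\eta-\epsilon\ge0$, so $h$ strictly increases along the route. The selected neighbor is $\arg\max_{j\in U}\widehat\Delta_{\widehat s_{t-1},j}$. Any competitor $j\in U$ has true potential at most $h_{j^\star}+2\epsilon$, because $h_j-h_{\widehat s_{t-1}}\le\widehat\Delta_{\cdot j}+\epsilon\le\widehat\Delta_{\cdot j^\star}+\epsilon\le h_{j^\star}-h_{\widehat s_{t-1}}+2\epsilon$. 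So each step chooses a neighbor within $2\epsilon$ of the best unvisited neighbor.

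\textbf{Step 2: the route is short.} Combined with strict ascent, Step 1 makes the realized route a strictly ascending $2\epsilon$-near-greedy path from $s$. Its length is therefore at most $H_\epsilon$, which gives $T\le\min\{L,H_\epsilon\}$. One subtlety is that the algorithm restricts the argmax to unvisited neighbors. Strict ascent means every visited client has lower potential than the current one, so the highest neighbor overall, if it exceeds the current client, is unvisited. I will reconcile the $H_\epsilon$ definition with this observation.

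\textbf{Step 3: the route stops only at a maximizer or when the budget runs out.} This is the main step. Suppose the current client $i=\widehat s_t$ is nonoptimal and $t<L$. By \eqref{eq:minimum-ascent} there is a neighbor $j$ with $h_j-h_i\ge\gamma>0$, and $j$ is unvisited by the argument in Step 2. Then $\widehat\Delta_{ij}\ge\gamma-\epsilon>\eta$, so $U\ne\varnothing$ and the break test fails. The algorithm therefore moves, and the accepted neighbor $j^\star$ satisfies $\Delta_{ij^\star}\ge\widehat\Delta_{ij^\star}-\epsilon\ge\widehat\Delta_{ij}-\epsilon\ge\gamma-2\epsilon$. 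Every step taken from a nonoptimal client thus gains at least $\gamma-2\epsilon$.

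\textbf{Step 4: conclusion.} The route therefore terminates either when $T=L$ or at a global maximizer. If it reaches a maximizer, termination there is automatic, since no neighbor has potential higher than $h^\star$ (in particular none exceeding it by $\eta+\epsilon$). Since $T\le H_\epsilon$, if $L\ge H_\epsilon$ the budget cannot bind before termination, so the route ends at a maximizer. Telescoping gives $\Gamma_L=\sum_t\Delta\ge(\gamma-2\epsilon)T$.

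\textbf{Main obstacle.} It remains to show $\Gamma_L\ge(\gamma-2\epsilon)\min\{L,H_\epsilon\}$, that is, $T=\min\{L,H_\epsilon\}$ when the route stops early at a maximizer. This needs a careful reading of $H_\epsilon$. If the route stops at a maximizer with $T<H_\epsilon$, then some longer near-greedy ascending path exists, and the bound would require treating the gain relative to the path structure. I would first try to show that when termination at a maximizer occurs, every near-greedy ascending path from $s$ has the same length under the definition. Failing that, I would interpret $H_\epsilon$ as the length of the realized near-greedy ascent, which makes the bound immediate.
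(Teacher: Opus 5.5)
Your Steps 1--4 are sound and match the first step of the paper's proof. On the comparison-error event, accepted moves strictly increase $h$, and the realized route is admissible in the definition of $H_\epsilon$. The reconciliation you defer is immediate: visited neighbors lie below the current client, hence below $h_{j^\star}$. So $T\le\min\{L,H_\epsilon\}$, each move gains at least $\gamma-2\epsilon$, and an early stop happens only at a global maximizer.

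The gap is the headline bound \eqref{eq:unified-referral-gain} in exactly the case you flag: an early stop at a maximizer with $T<\min\{L,H_\epsilon\}$. Neither repair you suggest works. Admissible ascending paths from $s$ need not share a length. For example, take potentials $0,1,1,2,3$ on clients $s,a,b,c,v$, edges $\{s,a\},\{s,b\},\{a,v\},\{b,c\},\{c,v\}$, and $\gamma=1$. This satisfies \eqref{eq:minimum-ascent} and admits admissible paths $s,a,v$ and $s,b,c,v$, so $H_\epsilon=3$. Noise or tie-breaking may send the route along the shorter one, giving $T=2$. Reinterpreting $H_\epsilon$ as the realized route length changes the theorem rather than proving it, because $H_\epsilon$ is a route-independent maximum fixed by the potentials, the entry, and $\epsilon$.

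The missing idea is that you never need $T=\min\{L,H_\epsilon\}$. You need a lower bound on the total gap $h^\star-h_s$, certified by a \emph{different} path.

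If $T<L$, the route ends at a maximizer, so $\Gamma_L(x,s)=h^\star(x)-h_s(x)$ exactly. Now take a longest admissible path $(i_0=s,\ldots,i_{H_\epsilon})$.

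Its endpoint is a global maximizer. Otherwise, by \eqref{eq:minimum-ascent}, a highest-potential neighbor of $i_{H_\epsilon}$ gains at least $\gamma>0$, lies off the path by strict ascent, and trivially satisfies the near-best condition, so it would extend the path and contradict maximality.

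Each earlier vertex $i_r$ has a strictly higher successor, so it is nonoptimal. Its best neighbor therefore gains at least $\gamma$, and the near-best condition forces $h_{i_{r+1}}-h_{i_r}\ge\gamma-2\epsilon$. Telescoping gives $h^\star-h_s\ge H_\epsilon(\gamma-2\epsilon)$.

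Combined with your case $T=L$, this yields $\Gamma_L(x,s)\ge\min\{h^\star-h_s,\,L(\gamma-2\epsilon)\}\ge(\gamma-2\epsilon)\min\{L,H_\epsilon\}$, which is how the paper closes the argument. In words: a short route that stops at a maximizer must have taken large steps, because the longest admissible path shows the total gap is at least $H_\epsilon(\gamma-2\epsilon)$.
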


The guarantee in \eqref{eq:unified-referral-gain} lower-bounds $\Gamma_L(x,s)$, the improvement in source compatibility, by a product of two terms.
The first term, $\gamma-2\epsilon$, lower-bounds the population gain per accepted move.
Larger per-step gains allow substantial improvement with fewer referrals.
The second term, $\min\{L,H_\epsilon\}$, is the effective step count.
A smaller ascent depth $H_\epsilon$ requires less budget to guarantee reaching a global maximizer, while with $L<H_\epsilon$ the route may exhaust its budget first.
Note that improvement in source compatibility does not by itself imply improved predictive performance.
Appendix~\ref{app:structural-proof} provides the proof and analyzes ascent depth and referral gain in several common graph types.

\section{Experiments}
\label{sec:experiments}

\subsection{Experimental setup and evaluation}
\textbf{Benchmarks and topologies.}
We first evaluate on CIFAR-10, CIFAR-20, and CIFAR-100 benchmarks under 2 non-IID partitions.
In the shards-per-client (SPC) partition based on~\citep{mcmahan2017communication}, each client receives five equal-sized, label-homogeneous shards for CIFAR-10 and 25 for CIFAR-20/100.
In the Dirichlet partition, each client's class proportions are drawn from a Dirichlet distribution with concentration $0.3$.
The main benchmark has eight clients.
We consider four communication topologies: clients are connected as a \textsc{Path}, a \textsc{Cycle}, a connected random graph in which every client has four neighbors (\textsc{RR4}), or a \textsc{Complete} graph.
These are standard decentralized communication topologies and are ordered by increasing connectivity.
For further details on the benchmarks, graph topologies, and data construction, see Appendix~\ref{app:frozen-data}.

\textbf{Training and inference.}
We train randomly initialized ResNet-18 models for 800 communication rounds using \texttt{DFedAvgM}~\citep{sun2023dfedavgm}, freeze each local encoder, and then fit the personalized classifier heads and routers for 200 rounds.
The routing objective weights the two endpoints of every edge equally.
At inference, \texttt{EdgeDRM} starts from a uniformly sampled non-source entry, excludes revisits, and averages predictions from visited experts using temperature-one score weights.

\textbf{Comparisons and reporting.}
At test time, each query from source client $s$ is assigned uniformly at random to an entry client other than $s$.
\texttt{Local} trains one model per client using only local data and predicts with the model at that entry.
The decentralized baselines \texttt{DFedAvgM}~\citep{sun2023dfedavgm}, \texttt{DisPFL}~\citep{dai2022dispfl}, and \texttt{FedSPD}~\citep{lin2025fedspd} also predict with their learned model at that entry.
By contrast, we include centralized \texttt{FedDRM}~\citep{wang2026feddrm} only as an illustrative reference because its client-selection head scores all clients globally and therefore cannot operate under the neighbor-only communication constraints of our decentralized setting.
\texttt{EdgeDRM-NR} uses the same trained model but keeps each query at its entry client.

\subsection{Main results}
\begin{table}[!htbp]
\centering
\vskip -0.05in
\caption{\textbf{Eight-client decentralized CIFAR benchmark} reporting system accuracy (\%) across three datasets and two non-IID partitions.
\texttt{Local} and centralized \texttt{FedDRM} are illustrative references.
Bold cells with blue shading mark the best decentralized result within each topology and setting.
All entries are means $\pm$ sample SD across 3 seeds.}
\label{tab:decentralized-comparison}
\definecolor{resultblue}{RGB}{40,112,181}
\renewcommand{\arraystretch}{0.96}
\setlength{\tabcolsep}{2.4pt}
\resizebox{\linewidth}{!}{%
\begin{tabular}{@{}llrrrrrr@{}}
\toprule
\multirow{2}{*}{Group}&\multirow{2}{*}{Method} & \multicolumn{2}{c}{CIFAR-10} & \multicolumn{2}{c}{CIFAR-20} & \multicolumn{2}{c}{CIFAR-100} \\
\cmidrule(lr){3-4}\cmidrule(lr){5-6}\cmidrule(lr){7-8}
 &  & Dir-0.3 & 5-SPC & Dir-0.3 & 25-SPC & Dir-0.3 & 25-SPC \\
\midrule
\multirow{2}{*}{\textbf{Reference}}
& \texttt{Local} & $16.59\pm3.78$ & $18.46\pm1.49$ & $13.35\pm0.29$ & $11.17\pm0.54$ & $4.03\pm0.23$ & $2.49\pm0.10$ \\
& \texttt{FedDRM} & $63.81\pm1.54$ & $61.24\pm2.29$ & $39.28\pm0.87$ & $42.42\pm1.12$ & $28.67\pm0.58$ & $33.38\pm0.87$ \\
\midrule
\multirow{5}{*}{\textbf{\textsc{Path}}}
& \texttt{DFedAvgM} & $28.55\pm7.62$ & $27.41\pm0.21$ & $23.94\pm0.03$ & $21.47\pm1.04$ & $9.28\pm0.61$ & $7.03\pm0.44$ \\
& \texttt{DisPFL} & $27.19\pm8.01$ & $26.87\pm1.91$ & \cellcolor{resultblue!15}$\mathbf{26.43\pm0.65}$ & $22.79\pm1.68$ & $10.83\pm0.96$ & $6.41\pm0.64$ \\
& \texttt{FedSPD} & $18.34\pm5.17$ & $20.00\pm1.14$ & $15.44\pm0.99$ & $14.41\pm0.78$ & $4.68\pm0.38$ & $4.30\pm0.11$ \\
& \texttt{EdgeDRM-NR} & $22.00\pm4.85$ & $21.53\pm1.18$ & $21.57\pm0.35$ & $18.51\pm1.79$ & $6.89\pm0.62$ & $4.21\pm0.21$ \\
& \texttt{EdgeDRM} & \cellcolor{resultblue!15}$\mathbf{33.08\pm3.53}$ & \cellcolor{resultblue!15}$\mathbf{32.88\pm1.00}$ & $24.87\pm0.33$ & \cellcolor{resultblue!15}$\mathbf{23.97\pm1.12}$ & \cellcolor{resultblue!15}$\mathbf{11.19\pm0.58}$ & \cellcolor{resultblue!15}$\mathbf{11.15\pm0.34}$ \\
\midrule
\multirow{5}{*}{\textbf{\textsc{Cycle}}}
& \texttt{DFedAvgM} & $30.34\pm6.23$ & $27.63\pm2.22$ & $25.00\pm0.15$ & $23.19\pm1.11$ & $10.39\pm0.72$ & $7.93\pm0.45$ \\
& \texttt{DisPFL} & $29.10\pm7.59$ & $28.02\pm1.12$ & \cellcolor{resultblue!15}$\mathbf{27.54\pm0.38}$ & $24.49\pm2.06$ & $11.42\pm0.50$ & $6.99\pm0.61$ \\
& \texttt{FedSPD} & $20.05\pm4.01$ & $20.95\pm1.13$ & $15.39\pm0.65$ & $14.78\pm1.04$ & $5.02\pm0.20$ & $4.74\pm0.41$ \\
& \texttt{EdgeDRM-NR} & $22.92\pm4.71$ & $21.95\pm0.75$ & $21.99\pm0.22$ & $19.04\pm1.62$ & $7.49\pm0.71$ & $4.32\pm0.39$ \\
& \texttt{EdgeDRM} & \cellcolor{resultblue!15}$\mathbf{35.20\pm2.30}$ & \cellcolor{resultblue!15}$\mathbf{34.79\pm0.63}$ & $25.99\pm0.59$ & \cellcolor{resultblue!15}$\mathbf{25.68\pm0.94}$ & \cellcolor{resultblue!15}$\mathbf{12.40\pm0.63}$ & \cellcolor{resultblue!15}$\mathbf{12.08\pm0.42}$ \\
\midrule
\multirow{5}{*}{\textbf{\textsc{RR4}}}
& \texttt{DFedAvgM} & $40.10\pm1.86$ & $37.58\pm1.05$ & $28.83\pm0.73$ & $28.14\pm1.41$ & $13.92\pm0.90$ & $12.14\pm0.54$ \\
& \texttt{DisPFL} & $30.31\pm7.77$ & $30.75\pm1.45$ & $28.89\pm1.24$ & $26.07\pm2.60$ & $12.64\pm0.78$ & $7.50\pm0.62$ \\
& \texttt{FedSPD} & $19.50\pm4.91$ & $21.69\pm0.93$ & $15.85\pm1.83$ & $14.44\pm1.57$ & $5.41\pm0.52$ & $4.49\pm0.22$ \\
& \texttt{EdgeDRM-NR} & $25.35\pm4.54$ & $23.44\pm1.23$ & $22.72\pm0.60$ & $20.09\pm1.81$ & $8.05\pm0.40$ & $5.53\pm0.18$ \\
& \texttt{EdgeDRM} & \cellcolor{resultblue!15}$\mathbf{50.12\pm2.21}$ & \cellcolor{resultblue!15}$\mathbf{46.19\pm1.17}$ & \cellcolor{resultblue!15}$\mathbf{29.91\pm0.79}$ & \cellcolor{resultblue!15}$\mathbf{31.51\pm0.52}$ & \cellcolor{resultblue!15}$\mathbf{17.59\pm0.59}$ & \cellcolor{resultblue!15}$\mathbf{20.01\pm0.52}$ \\
\midrule
\multirow{5}{*}{\textbf{\textsc{Complete}}}
& \texttt{DFedAvgM} & $55.04\pm1.47$ & \cellcolor{resultblue!15}$\mathbf{52.91\pm1.43}$ & \cellcolor{resultblue!15}$\mathbf{32.62\pm0.59}$ & \cellcolor{resultblue!15}$\mathbf{36.10\pm0.38}$ & \cellcolor{resultblue!15}$\mathbf{21.02\pm0.48}$ & \cellcolor{resultblue!15}$\mathbf{23.81\pm1.08}$ \\
& \texttt{DisPFL} & $36.63\pm5.08$ & $34.35\pm0.88$ & $30.10\pm0.99$ & $28.60\pm1.98$ & $15.09\pm0.97$ & $10.88\pm0.55$ \\
& \texttt{FedSPD} & $27.33\pm7.01$ & $27.29\pm3.58$ & $15.35\pm1.19$ & $15.21\pm0.58$ & $6.01\pm0.60$ & $6.14\pm0.38$ \\
& \texttt{EdgeDRM-NR} & $27.85\pm4.03$ & $25.80\pm0.93$ & $23.64\pm0.46$ & $21.12\pm1.44$ & $9.31\pm0.37$ & $6.88\pm0.23$ \\
& \texttt{EdgeDRM} & \cellcolor{resultblue!15}$\mathbf{56.62\pm2.87}$ & $52.84\pm1.86$ & $31.31\pm1.02$ & $34.12\pm0.31$ & $20.02\pm0.63$ & $23.65\pm0.67$ \\
\bottomrule
\end{tabular}%
}
\vskip -0.15in
\end{table}

\textbf{CIFAR benchmark results.}
The main results in Table~\ref{tab:decentralized-comparison} yield three main observations.
First, \texttt{EdgeDRM} is the strongest graph-constrained method in most settings.
It ranks first in 16 of the 18 dataset--partition configurations on the three sparse graphs and in 17 of the 24 graph--configuration pairs overall.
Its mean accuracy difference relative to the strongest competing decentralized method in each sparse-graph setting is $+3.61$ percentage points.
The two sparse-graph exceptions both occur on CIFAR-20/Dirichlet-$0.3$, which exhibits milder client label skew than CIFAR-20/25-SPC in the realized partitions.
Across the sparse graphs, its mean margin over the strongest competing decentralized method is $4.75$ percentage points under SPC versus $2.48$ under Dirichlet-$0.3$, a difference of $2.27$ points.
Taken together with the \texttt{EdgeDRM-NR} ablation below, the results suggest that learned referral helps exploit stronger client specialization.
Second, the \textsc{Complete} graph, equivalent to the central server case, \texttt{EdgeDRM} is effectively tied with \texttt{DFedAvgM}, averaging only $0.49$ points lower across settings.
So full graph connectivity narrows the advantage of \texttt{EdgeDRM}.
Both methods nevertheless remain below centralized \texttt{FedDRM} in all six settings, yielding the gap between routing designed for centralized setting. 
Third, referral itself provides a large and universal gain.
\texttt{EdgeDRM} outperforms \texttt{EdgeDRM-NR} in all configurations, by $3.30$--$28.77$ points and $11.87$ points on average.

\begin{wrapfigure}{r}{0.5\textwidth}
\centering
\vspace{-1\baselineskip}
\includegraphics[width=1.03\linewidth]{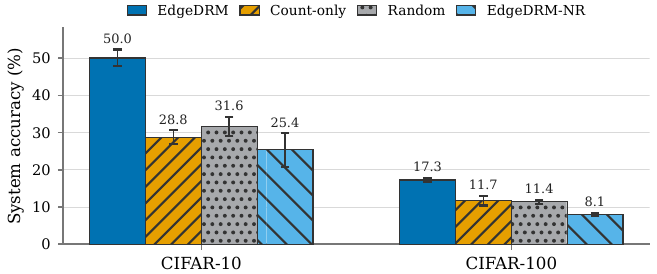}
\vspace{-\baselineskip}
\caption{\textbf{Fixed-expert hard-terminal routing ablation} on CIFAR-10 and CIFAR-100 with Dirichlet-$0.3$ partitions and \textsc{RR4}.
}
\label{fig:referral-policy-ablation}
\vspace{-\baselineskip}
\end{wrapfigure}
\textbf{Where the referral gain comes from.}
We conduct an ablation to isolate the effect of the routing rule.
Every method uses the terminal expert's hard prediction, without averaging predictions across visited experts.
Besides \texttt{EdgeDRM} and \texttt{EdgeDRM-NR}, we also compare \texttt{Count-only}, which triggers a referral when $\log(n_j/n_i)>0$, and \texttt{Random}, which samples uniformly among unvisited neighbors.
The improvement over \texttt{EdgeDRM-NR} and \texttt{Random} shows that selecting suitable neighbors is more effective than null/arbitrary referral.
Comparison with \texttt{Count-only} further shows that the gain comes from the trained router rather than solely from differences in local sample sizes.

\textbf{Scaling in number of clients.}
We test whether the behavior observed with few clients persists as the client population grows substantially.
Table~\ref{tab:thirty-two-experts} shows that \texttt{EdgeDRM} remains the strongest decentralized method in both settings.
We note that, because the total CIFAR-10 data pool is fixed, increasing the number of clients reduces the average local sample sizes, which likely contributes to the lower accuracy relative to the results in Table~\ref{tab:decentralized-comparison}.

\begin{table}[!htbp]
\centering
\caption{CIFAR-10/\textsc{RR4} system accuracy with 32 clients (\%; mean $\pm$ SD over three seeds).}
\label{tab:thirty-two-experts}
\resizebox{0.9\linewidth}{!}{
\begin{tabular}{@{}lc|ccccc|c@{}}
\toprule
& Reference & \multicolumn{5}{c|}{Decentralized} & Reference \\
\cmidrule(lr){2-2}\cmidrule(lr){3-7}\cmidrule(lr){8-8}
Setting & \texttt{Local} & \texttt{DFedAvgM} & \texttt{DisPFL} & \texttt{FedSPD} & \texttt{EdgeDRM-NR} & \texttt{EdgeDRM} & \texttt{FedDRM} \\
\midrule
Dir-0.3 & $16.45\pm1.15$ & $27.79\pm3.25$ & $23.71\pm2.65$ & $17.82\pm0.69$ & $21.52\pm1.50$ & $\mathbf{30.06\pm1.16}$ & $38.45\pm0.52$ \\
5-SPC & $18.11\pm0.47$ & $27.92\pm1.22$ & $24.87\pm0.95$ & $20.54\pm0.75$ & $22.76\pm1.20$ & $\mathbf{31.70\pm0.39}$ & $35.96\pm0.52$ \\
\bottomrule
\end{tabular}}
\vskip -0.1in
\end{table}

\subsection{Referral traffic}
\begin{wrapfigure}{r}{0.45\textwidth}
    \vspace{-3\baselineskip}
    \centering
    \includegraphics[width=\linewidth]{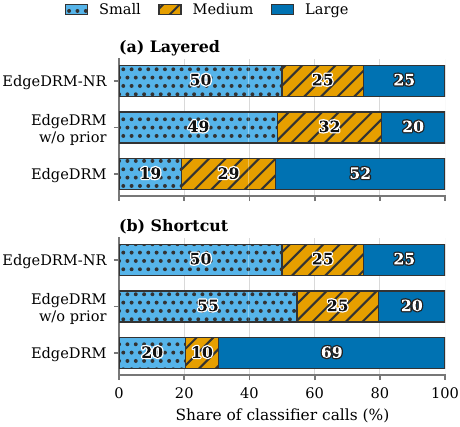}
    \vskip -0.05in
    \caption{\textbf{Referral traffic by hospital tier on two topologies}.
    Bars show the mean share of visited-classifier calls over 3 seeds.}
    \label{fig:hospital-traffic}
    \vskip -0.15in
\end{wrapfigure}
We conduct an ablation study to examine how a sample-count prior changes referral traffic across hospital tiers.
We use 32 clients: 16 small, eight medium, and eight large hospitals holding 250, 1000, and 4000 training examples, respectively.
We evaluate two graph structures.
In the layered graph, each tier forms a cycle, every small hospital connects to one medium hospital, and medium and large hospitals are matched one-to-one.
Thus, a small hospital can reach the large tier only through the medium tier, requiring at least two hops.
The shortcut graph retains all edges and adds one direct small--large edge for each small hospital, making a large hospital reachable from every small hospital in one hop.
\texttt{EdgeDRM w/o prior} applies the same history-weighted referral rule after removing the sample-count offset $\log(n_j/n_i)$ and serves as a sample-size-agnostic comparison.
In Figure \ref{fig:hospital-traffic}, \texttt{EdgeDRM w/o prior} largely preserves the \texttt{EdgeDRM-NR} tier proportions, whereas \texttt{EdgeDRM} substantially increases large-hospital traffic, with an even stronger concentration when shortcut edges provide direct access to large hospitals.
This pattern captures a realistic healthcare trade-off: the sample-count prior is a plausible way to represent the stronger expertise of larger hospitals, but systematically trusting those hospitals concentrates demand and increases their service burden.
The shortcut amplification further highlights the need for tiered referral systems: allowing patients to bypass intermediate hospitals and go directly to the largest regional centers can overwhelm scarce specialist capacity, whereas staged referrals distribute demand across levels of care.

\subsection{Real medical data}
\label{sec:isic2019}

\begin{wrapfigure}{r}{0.42\textwidth}
\vspace{-2.7\baselineskip}
\centering
\captionsetup{hypcap=false,justification=raggedright,singlelinecheck=false}
\captionof{table}{\small\textbf{ISIC2019 balanced accuracy} (\%; three-seed mean).
Parentheses show the \texttt{EdgeDRM} change from \texttt{DFedAvgM}.}
\vskip -0.1in
\label{tab:isic2019-current}
\small
\setlength{\tabcolsep}{4pt}
\begingroup
\renewcommand{\arraystretch}{0.9}
\begin{tabular}{lcc}
\toprule
Graph & \texttt{DFedAvgM} & \texttt{EdgeDRM} \\
\midrule
\textsc{Path} & 28.45 & \textbf{28.62} \textcolor{green!50!black}{(+0.18)} \\
\textsc{Cycle} & 27.20 & \textbf{28.35} \textcolor{green!50!black}{(+1.15)} \\
\textsc{RR3} & 27.45 & \textbf{27.76} \textcolor{green!50!black}{(+0.31)} \\
\textsc{Complete} & \textbf{28.93} & 28.79 \textcolor{red!75!black}{(-0.14)} \\
\bottomrule
\end{tabular}
\endgroup
\vspace{-0.8\baselineskip}
\end{wrapfigure}
To assess \texttt{EdgeDRM} beyond synthetic benchmarks, we conduct a real-world skin-lesion classification study on ISIC2019 using the six source-center assignments distributed with FLamby~\citep{NEURIPS2022_232eee8e}.
Unlike the CIFAR benchmarks, where client heterogeneity is imposed through synthetic label-skew partitions, these clients follow source and acquisition metadata from real medical image collections.
Every test image is evaluated at all six entry clients, and we report pooled balanced accuracy, which weights recall equally across classes to account for the pronounced class imbalance in ISIC2019.
Appendix~\ref{app:isic2019} provides the data provenance, split audit, graph, training, and evaluation details.
\texttt{EdgeDRM} matches or improves upon the graph-matched \texttt{DFedAvgM} baseline on three of the four topologies. 
On \textsc{Complete}, it is $0.14$ points lower, consistent with previous results, where full connectivity likewise narrows or eliminates the advantage of learned local referral.

\section{Conclusion and Limitations}

\texttt{EdgeDRM} formulates inference-time collaboration in decentralized federated learning as budgeted local expert referral.
A query moves only between neighbors on the communication graph, and only visited experts contribute to its final prediction.
Neighbor-local density-ratio comparisons make referral trainable without a central router or global expert scoring, while our analysis relates referral gains to comparison accuracy and graph structure.

Across six CIFAR settings, \texttt{EdgeDRM} ranks first among graph-matched decentralized methods in 16 of the 18 sparse-graph comparisons, with a mean accuracy difference of $+3.61$ percentage points relative to the strongest competing method in each comparison.
The \textsc{Complete} graph gives \texttt{EdgeDRM} its highest absolute accuracy in every setting, but its mean advantage over \texttt{DFedAvgM} disappears ($-0.49$ points).
Together, these results establish graph-constrained referral as a distinct inference setting and a useful mechanism under sparse connectivity, while showing that greater connectivity can improve absolute accuracy without increasing the relative benefit of learned referral.
Closing the gap between the theory and learned neural representations remains future work.

\bibliography{references}
\bibliographystyle{unsrtnat}

\appendix
\section{Related Work}
\label{app:related_work}

\paragraph{Federated routing and mixtures of experts.}
\texttt{FedDRM} is the closest starting point for our work.
It learns a globally indexed source-classification head and asks a server to compare all clients for every query \citep{wang2026feddrm}.
\texttt{FedSPM} instead uses a semiparametric mixture to represent inter- and intra-client heterogeneity \citep{wang2026fedspm}.
Federated mixture-of-experts methods also construct specialized predictors: \texttt{FedMix} adapts a client-specific subset of experts, whereas \texttt{DDOME} jointly trains gates and experts for on-the-fly selection \citep{reisser2021fedmix,farhat2023learning}.
These methods establish the value of specialization, but they do not address our information constraint: at inference, no server or client can evaluate a global gate.
\texttt{EdgeDRM} replaces global selection with a sequence of graph-supported comparisons and forms its prediction only from the experts reached by that sequence.

\paragraph{Personalized and decentralized federated learning.}
Personalized FL decides which information clients should share during training. \texttt{pFedGraph} learns a weighted collaboration graph at a server \citep{ye2023pfedgraph}.
Fully decentralized methods learn personalized models and sparse collaboration graphs, exchange sparse model masks, optimize budgeted collaborator selection, or form soft clusters through peer-to-peer updates \citep{zantedeschi2020fully,dai2022dispfl,kharrat2025dpfl,lin2025fedspd}.
\texttt{DFedAvgM} combines local momentum with neighbor model averaging \citep{sun2023dfedavgm}, while decentralized SGD theory characterizes gossip and local-update convergence \citep{koloskova2020unified}.
These approaches determine how models collaborate during training and ordinarily return a model at each client.
Our problem begins after training: for each new query, the system must decide whether and where that query should move.
We therefore use \texttt{DFedAvgM} as the primary graph-matched training baseline and isolate referral with \texttt{EdgeDRM-NR}, evaluated on the same \texttt{EdgeDRM} predictors.

\paragraph{Expert referral and local graph search.}
Expert-referral networks are closer to our deployment protocol because they model accept-or-forward decisions on sparse graphs.
\texttt{DIEL} learns topic-dependent neighbor expertise from observed referral outcomes through a distributed active-learning rule \citep{khudabukhsh2016distributed}.
Decentralized-search theory studies when local connections yield short query paths \citep{kleinberg2000smallworld,ma2019performance}, and learned similarity-graph routing can use graph-wide search supervision \citep{baranchuk2019learning}.
\texttt{EdgeDRM} requires neither logged referral outcomes nor target paths.
Its local scores are learned from the static datasets held at neighboring clients, and its movement budget makes the access constraint explicit.

\paragraph{Density ratios and pairwise consistency.}
Classifier-based density-ratio estimation recovers pointwise density ratios from prior-corrected posterior odds \citep{sugiyama2012density}.
Statistical Hodge theory shows how pairwise comparison flows decompose into gradient, cyclic, and harmonic components \citep{jiang2011statistical}.
We do not propose a new density-ratio estimator in isolation.
Instead, one potential per client induces compatible edge differences, while balanced endpoint classification estimates pairwise density-ratio terms and the sample-count prior is added only at inference.
We then analyze how estimation error, representation disagreement, and graph structure affect the resulting referral policy.

\section{Derivation of the \texttt{EdgeDRM} Objective}
\label{app:edgedrm-derivation}

This appendix establishes the precise connection between empirical likelihood (EL) and the global client-source conditional likelihood in Section~\ref{sec:edgedrm-model}.
The equality requires profiling both the baseline masses and the density-ratio intercepts at fixed representations and slopes.
We then derive the graph-local conditional objective used by \texttt{EdgeDRM}.

\subsection{Empirical likelihood under the reference density-ratio model}

Our objective is to estimate the label model and the representation density ratios by maximizing their joint empirical likelihood while treating the reference distribution as an unrestricted nuisance parameter.
Recall that client $i\in V=[m]$ holds $D_i=\{(x_{ir},y_{ir})\}_{r=1}^{n_i}$ with $n_i>0$, total sample size $N=\sum_{i=1}^m n_i$, and sample proportion $\rho_i=n_i/N$.
Given the shared encoder $g_\theta$, write $z_{ir}=g_\theta(x_{ir})$ for the observed representation and let $P_i^Z$ denote the induced representation distribution on client $i$.
For client $i$, the reference density-ratio model specifies
\begin{equation*}
dP_i^Z(z)=\varphi_i(z)dF(z),
\qquad
\varphi_i(z)=\exp\{b_i+\vartheta_i^\top z\},
\qquad
F=P_1^Z,\quad b_1=0,\quad\vartheta_1=0.
\end{equation*}
The reference constraint makes $\varphi_1\equiv1$, whereas the remaining density ratios describe how each client distribution tilts the common baseline $F$.
For the empirical-likelihood construction, treat the common representations as fixed.
Let $\xi_1,\ldots,\xi_{n_z}$ be the distinct pooled representations, let $k_t=\#\{(u,r):z_{ur}=\xi_t\}$, and write $t(u,r)$ for the index satisfying $z_{ur}=\xi_{t(u,r)}$.
Thus, $k_t\geq1$ and $\sum_t k_t=N$.
We represent the unknown reference distribution by assigning mass $p_t=F(\{\xi_t\})$ to each distinct pooled representation $\xi_t$.
Let $P_u^{Y,Z}$ denote the joint distribution of the label and representation at client $u$.
Following the empirical-likelihood construction in \citet{wang2026feddrm}, we first define the joint empirical log-likelihood of the observed labels and representations as
\begin{equation*}
\ell_{\rm EL}(p,\theta,\omega,b,\vartheta)
:=\sum_{u=1}^m\sum_{r=1}^{n_u}
\log P_u^{Y,Z}(\{(y_{ur},z_{ur})\}).
\end{equation*}
Because the conditional label model depends on $X$ only through $Z=g_\theta(X)$, the joint distribution factorizes into the conditional label distribution and the representation marginal:
\begin{equation*}
P_u^{Y,Z}(\{(y_{ur},z_{ur})\})
=P_u^{Y\mid X}(\{y_{ur}\}\mid x_{ur};\theta,\omega_u)
P_u^Z(\{z_{ur}\}).
\end{equation*}
The density-ratio model and the definition of the empirical baseline masses then give
\begin{equation*}
P_u^Z(\{z_{ur}\})
=\varphi_u(z_{ur})F(\{z_{ur}\})
=\varphi_u(z_{ur})p_{t(u,r)}.
\end{equation*}
Substituting these two identities into the joint empirical log-likelihood yields
\begin{equation*}
\begin{aligned}
\ell_{\rm EL}(p,\theta,\omega,b,\vartheta)
&=\sum_{u=1}^m\sum_{r=1}^{n_u}
\log\!\left[
P_u^{Y\mid X}(\{y_{ur}\}\mid x_{ur};\theta,\omega_u)
P_u^Z(\{z_{ur}\})
\right]\\
&=\sum_{u=1}^m\sum_{r=1}^{n_u}
\log\!\left[
P_u^{Y\mid X}(\{y_{ur}\}\mid x_{ur};\theta,\omega_u)
\varphi_u(z_{ur})p_{t(u,r)}
\right]\\
&=\sum_{u=1}^m\sum_{r=1}^{n_u}
\log P_u^{Y\mid X}(\{y_{ur}\}\mid x_{ur};\theta,\omega_u)
+\sum_{u=1}^m\sum_{r=1}^{n_u}\log\varphi_u(z_{ur})
+\sum_{u=1}^m\sum_{r=1}^{n_u}\log p_{t(u,r)}\\
&=\sum_{u=1}^m\sum_{r=1}^{n_u}
\log P_u^{Y\mid X}(\{y_{ur}\}\mid x_{ur};\theta,\omega_u)
+\sum_{u=1}^m\sum_{r=1}^{n_u}\log\varphi_u(z_{ur})
+\sum_{t=1}^{n_z} k_t\log p_t.
\end{aligned}
\end{equation*}
The empirical-likelihood problem maximizes this joint objective over the model parameters and baseline masses subject to the requirement that $F$ and every tilted measure $P_i^Z$ are probability distributions.
The baseline normalization and the client-specific normalizations therefore impose
\begin{equation*}
p_t\geq0,\qquad
\sum_{t=1}^{n_z} p_t=1,\qquad
\sum_{t=1}^{n_z} p_t\varphi_i(\xi_t)=1,\quad i=2,\ldots,m.
\end{equation*}
Assigning one mass to each distinct representation avoids counting the same atom of $F$ more than once, while the multiplicity $k_t$ preserves its full contribution to the likelihood.
A finite objective value further requires $p_t>0$ for every observed atom.
Because the normalization constraints couple $p$ and $b$, profiling only $p$ at fixed $(b,\vartheta)$ does not generally recover the client-source likelihood pointwise in $b$.
The exact equivalence therefore requires profiling both the baseline masses and the density-ratio intercepts, as established next.

\subsection{Exact equivalence after profiling masses and intercepts}

At fixed representations and slopes, our objective is to profile the baseline masses and density-ratio intercepts and express the resulting empirical likelihood as a conditional source-classification likelihood.
To define this conditional likelihood, introduce the source variable $C$ and let $P_\rho^{C,Z}$ denote the pooled source--representation distribution with
\begin{equation*}
P_\rho^C(\{i\})=\rho_i,
\qquad
P_\rho^{Z\mid C}(\cdot\mid i)=P_i^Z,
\qquad
\rho_i=n_i/N.
\end{equation*}
Under a normalized density-ratio model, Bayes' rule gives the posterior source probability
\begin{equation*}
s_i(z)
:=P_\rho^{C\mid Z}(\{i\}\mid z)
=\frac{\rho_i\varphi_i(z)}{\sum_{c=1}^m\rho_c\varphi_c(z)}.
\end{equation*}
For any finite candidate coefficients, this softmax expression remains a valid conditional source model even when no normalized baseline $F$ exists for those coefficients.
For the profiling identity, fix $(\theta,\omega)$ and the resulting pooled representations, and fix finite slopes $\vartheta$ with $\vartheta_1=0$.
Define the joint empirical-likelihood feasible set
\begin{equation*}
\mathcal A_\vartheta
 :=\left\{(p,b)\in(0,\infty)^{n_z}\times\mathbb R^m:
b_1=0,\quad
\sum_{t=1}^{n_z}p_t=1,\quad
\sum_{t=1}^{n_z}p_t\exp\{b_i+\vartheta_i^\top\xi_t\}=1\ \text{for }i=2,\ldots,m
\right\}.
\end{equation*}
Our claim is that jointly maximizing the empirical likelihood over the baseline masses and intercepts is equivalent to maximizing the conditional source likelihood over the intercepts alone:
\begin{equation}
\max_{(p,b)\in\mathcal A_\vartheta}
\ell_{\rm EL}(p,\theta,\omega,b,\vartheta)
=\sum_{u=1}^m\sum_{r=1}^{n_u}
\log P_u^{Y\mid X}(\{y_{ur}\}\mid x_{ur};\theta,\omega_u)
+\max_{\substack{b\in\mathbb R^m\\b_1=0}}
\sum_{u=1}^m\sum_{r=1}^{n_u}\log s_u(z_{ur})
+C_{\rm emp},
\label{eq:fully-profiled-el}
\end{equation}
where $C_{\rm emp}:=\sum_{t=1}^{n_z} k_t\log(k_t/N)-\sum_{i=1}^m n_i\log\rho_i$.
Thus, after profiling both the unrestricted baseline masses and the density-ratio intercepts, the two optimized objectives differ only by the fixed conditional-label term and the data-dependent constant $C_{\rm emp}$.
We follow the profiling argument of \citet{wang2026feddrm}: first profile the baseline masses, then identify their Lagrange multipliers through the intercept score equations, and finally verify existence and global optimality.

\paragraph{Profiling the baseline masses.}
Fix an intercept vector $b$ for which the empirical-likelihood constraints admit positive masses.
The conditional-label term and $\sum_{u,r}\log\varphi_u(z_{ur})$ do not depend on $p$, so profiling $p$ only requires maximizing $\sum_t k_t\log p_t$.
Using the equivalent constraints $\sum_t p_t=1$ and $\sum_t p_t\{\varphi_i(\xi_t)-1\}=0$ for $i=2,\ldots,m$, introduce the multiplier $\mu$ for the baseline normalization and $\nu_i$ for the normalization of client $i$.
The Lagrangian for the $p$-dependent term is
\begin{equation*}
\sum_{t=1}^{n_z}k_t\log p_t
-N\mu\left(\sum_{t=1}^{n_z}p_t-1\right)
-N\sum_{i=2}^m\nu_i\sum_{t=1}^{n_z}p_t\{\varphi_i(\xi_t)-1\}.
\end{equation*}
Differentiating with respect to $p_t$ gives
\begin{equation*}
0=\frac{k_t}{p_t}-N\mu-N\sum_{i=2}^m\nu_i\{\varphi_i(\xi_t)-1\}.
\end{equation*}
Multiplying this equation by $p_t$, summing over $t$, and applying the constraints gives $0=N-N\mu$, so $\mu=1$.
Hence the profiled masses have the form
\begin{equation}
p_t
=\frac{k_t}{N\left[1+\sum_{i=2}^m\nu_i\{\varphi_i(\xi_t)-1\}\right]},
\qquad t=1,\ldots,n_z,
\label{eq:profiled-baseline-masses}
\end{equation}
Because $\sum_t k_t\log p_t$ is strictly concave in the positive masses and the constraints are affine in $p$, any feasible solution of these equations is the unique profiled mass vector.
For fixed $b$, the multipliers $\nu_i$ must satisfy the normalization constraints and generally depend on $b$.

\paragraph{Identifying the multipliers at the joint optimum.}
Suppose a joint interior optimum exists, denote it by $(p^\star,b^\star)$, and evaluate $\varphi_i$ at $b^\star$ throughout this paragraph.
Differentiation of the full Lagrangian with respect to $b_i$ gives, for $i=2,\ldots,m$,
\begin{equation*}
0
=n_i-N\nu_i\sum_{t=1}^{n_z}p_t^\star\varphi_i(\xi_t)
=n_i-N\nu_i.
\end{equation*}
The second equality uses the normalization $\sum_t p_t^\star\varphi_i(\xi_t)=1$.
Therefore, the multiplier at the joint optimum is $\nu_i=n_i/N=\rho_i$.
Substituting these values into~\eqref{eq:profiled-baseline-masses} and using $\varphi_1\equiv1$ gives
\begin{equation}
p_t^\star
=\frac{k_t}{N\left[1+\sum_{i=2}^m\rho_i\{\varphi_i(\xi_t)-1\}\right]}
=\frac{k_t}{N\sum_{c=1}^m\rho_c\varphi_c(\xi_t)}.
\label{eq:optimal-baseline-masses}
\end{equation}
Substitution into the empirical log-likelihood then yields
\begin{equation*}
\begin{aligned}
\ell_{\rm EL}(p^\star,\theta,\omega,b^\star,\vartheta)
&=\sum_{u=1}^m\sum_{r=1}^{n_u}
\log P_u^{Y\mid X}(\{y_{ur}\}\mid x_{ur};\theta,\omega_u)
+\sum_{u=1}^m\sum_{r=1}^{n_u}
\log\frac{\varphi_u(z_{ur})}{\sum_{c=1}^m\rho_c\varphi_c(z_{ur})}
+\sum_{t=1}^{n_z}k_t\log(k_t/N)\\
&=\sum_{u=1}^m\sum_{r=1}^{n_u}
\log P_u^{Y\mid X}(\{y_{ur}\}\mid x_{ur};\theta,\omega_u)
+\sum_{u=1}^m\sum_{r=1}^{n_u}\log s_u(z_{ur})
+\sum_{t=1}^{n_z}k_t\log(k_t/N)
-\sum_{i=1}^m n_i\log\rho_i\\
&=\sum_{u=1}^m\sum_{r=1}^{n_u}
\log P_u^{Y\mid X}(\{y_{ur}\}\mid x_{ur};\theta,\omega_u)
+\sum_{u=1}^m\sum_{r=1}^{n_u}\log s_u(z_{ur})
+C_{\rm emp}.
\end{aligned}
\end{equation*}
This calculation identifies the objective at a joint optimum, but it does not by itself establish that such an optimum exists or is globally optimal.

\paragraph{Existence and uniqueness of the intercept maximizer.}
We now establish that the conditional source log-likelihood has a unique finite maximizer and that the masses in~\eqref{eq:optimal-baseline-masses} are feasible at this intercept vector.
Write $h_{it}=\log\rho_i+\vartheta_i^\top\xi_t$ and $M=\max_{i,t}|h_{it}|<\infty$.
For $b_1=0$, let $B=\max_i b_i$, $A=\min_i b_i$, and $n_{\min}=\min_i n_i>0$.
To establish existence, we bound each numerator score above by $b_u+M$ and each log normalizer below by $B-M$:
\begin{equation*}
\sum_{u=1}^m\sum_{r=1}^{n_u}\log s_u(z_{ur})
\leq 2NM+\sum_i n_i b_i-NB
\leq 2NM-n_{\min}(B-A)
\leq 2NM-n_{\min}\|(b_2,\ldots,b_m)\|_\infty.
\end{equation*}
The bound shows that the conditional source log-likelihood tends to $-\infty$ as the free intercepts diverge, so continuity guarantees a finite maximizer.
To establish uniqueness, take any nonzero perturbation $v$ of the free intercepts and extend it by setting $v_1=0$.
The corresponding Hessian quadratic form is
\begin{equation*}
v^\top\nabla_b^2\!\left\{\sum_{u=1}^m\sum_{r=1}^{n_u}\log s_u(z_{ur})\right\}v
=-\sum_{t=1}^{n_z} k_t\operatorname{Var}_{C\sim s(\xi_t)}(v_C)<0.
\end{equation*}
Because $s_i(\xi_t)>0$ for every client and $v$ is nonconstant, $\operatorname{Var}_{C\sim s(\xi_t)}(v_C)>0$ for every $t$.
The Hessian is therefore negative definite, which proves strict concavity and uniqueness of the intercept maximizer.
Let $b^\star(\vartheta)$ denote this unique maximizer, consistently with the notation used above, and evaluate $\varphi_i$ at $b^\star$.
Its score equations verify that the masses in~\eqref{eq:optimal-baseline-masses} satisfy every empirical-likelihood constraint.
For each $i\geq2$, the score equation is
\begin{equation*}
0=\partial_{b_i}\!\left\{\sum_{u=1}^m\sum_{r=1}^{n_u}\log s_u(z_{ur})\right\}
=n_i-\rho_i\sum_{t=1}^{n_z} k_t\frac{\varphi_i(\xi_t)}{\sum_{c=1}^m\rho_c\varphi_c(\xi_t)}.
\end{equation*}
Because $\rho_i=n_i/N$, these equations imply $\sum_t p_t^\star\varphi_i(\xi_t)=1$ for every $i\geq2$.
Their baseline normalization follows from $\varphi_1\equiv1$ and $\rho_1>0$:
\begin{equation*}
1=\sum_t p_t^\star\sum_{c=1}^m\rho_c\varphi_c(\xi_t)
=\rho_1\sum_t p_t^\star+\sum_{i=2}^m\rho_i,
\qquad\text{so}\qquad\sum_t p_t^\star=1.
\end{equation*}
Thus $(p^\star,b^\star)$ is a feasible pair.

\paragraph{Global-optimality verification.}
It remains to verify that this feasible pair is globally optimal rather than merely stationary.
For any $(p,b)\in\mathcal A_\vartheta$, the normalization constraints give $\sum_t p_t\sum_{c=1}^m\rho_c\varphi_c(\xi_t)=1$.
Jensen's inequality therefore gives
\begin{equation*}
\begin{aligned}
&\ell_{\rm EL}(p,\theta,\omega,b,\vartheta)
-\sum_{u=1}^m\sum_{r=1}^{n_u}\log P_u^{Y\mid X}(\{y_{ur}\}\mid x_{ur};\theta,\omega_u)
-\sum_{u=1}^m\sum_{r=1}^{n_u}\log s_u(z_{ur})-C_{\rm emp}\\
&\quad=\sum_{t=1}^{n_z} k_t\log\frac{Np_t\sum_{c=1}^m\rho_c\varphi_c(\xi_t)}{k_t}\\
&\quad\leq N\log\left\{\sum_{t=1}^{n_z}\frac{k_t}{N}
\frac{Np_t\sum_{c=1}^m\rho_c\varphi_c(\xi_t)}{k_t}\right\}=0.
\end{aligned}
\end{equation*}
Equality holds precisely for the masses in~\eqref{eq:optimal-baseline-masses}.
The conditional source log-likelihood is uniquely maximized at $b^\star$, so this upper bound is attained by $(p^\star,b^\star)$ and proves~\eqref{eq:fully-profiled-el}.

\paragraph{Scope of the equivalence.}
Equation~\eqref{eq:fully-profiled-el} is an optimization identity obtained by profiling both the baseline masses and the intercepts at fixed representations and slopes.
At fixed representations, optimizing the slopes as well preserves equality of the suprema, and finite global maximizers correspond through the displayed masses whenever such maximizers exist.
However, existence of a finite slope maximizer is a separate condition and can fail under source separation.
Moreover, the identity does not assert that profiling only the baseline masses at a fixed $(b,\vartheta)$ yields the conditional-label and conditional-source terms in~\eqref{eq:fully-profiled-el} plus the same constant $C_{\rm emp}$.
For example, take two clients with one observation each, $z_{11}=0$, $z_{21}=1$, and $(\varphi_2(0),\varphi_2(1))=(1/2,3/2)$, while $\varphi_1\equiv1$.
The EL constraints force both baseline masses to be $1/2$, giving marginal likelihood $3/8$, whereas the source likelihood is $(2/3)(3/5)=2/5$ and $C_{\rm emp}=0$.
The model in this example is already reference normalized, so fixing the reference does not remove the need to optimize the intercepts.
If all pooled representations are distinct, $C_{\rm emp}=-N\log N-\sum_i n_i\log\rho_i$.
With repeated representations, $C_{\rm emp}$ is constant for a fixed representation but can change if learning $\theta$ changes their multiplicities.
Consequently, we use~\eqref{eq:loss} as a conditional-label plus source-classification objective, while interpreting the EL equivalence only at fixed representations rather than as an unrestricted identity throughout encoder training.

\subsection{From the global source loss to graph-local comparisons}

The global source posterior $s_i(z)$ cannot be evaluated from graph-local information because its denominator requires scores from all clients.
Our objective is therefore to construct an edgewise source model that uses only the two endpoint scores while preserving their pairwise density ratio.
Fix an edge $e=\{i,j\}$ and consider a balanced endpoint experiment that first selects $i$ or $j$ with probability $1/2$ and then draws a representation from the selected endpoint.
Let $q_{ij}^{\rm bal}(z)$ denote the conditional probability that representation $z$ originated from endpoint $j$ in this experiment.
Under the density-ratio model, Bayes' rule gives
\begin{equation*}
q_{ij}^{\rm bal}(z)
:=\varphi_j(z)/\{\varphi_i(z)+\varphi_j(z)\},
\end{equation*}
whose logit is the pairwise log-density ratio
\begin{equation*}
\operatorname{logit}q_{ij}^{\rm bal}(z)
=\log\{\varphi_j(z)/\varphi_i(z)\}
=(b_j-b_i)+(\vartheta_j-\vartheta_i)^\top z.
\end{equation*}
For the orientation $i<j$ and endpoint $u\in\{i,j\}$, define the endpoint-normalized source loss
\begin{equation}
\mathcal L_e^{(u)}
=-\frac1{n_u}\sum_{r=1}^{n_u}
\left[
\mathbf1\{u=i\}\log\{1-q_{ij}^{\rm bal}(z_{ur})\}
+\mathbf1\{u=j\}\log q_{ij}^{\rm bal}(z_{ur})
\right].
\label{eq:local-edge-loss}
\end{equation}
Because the endpoint prior is balanced, the negative empirical conditional log-likelihood averages the two endpoint-specific losses equally:
\begin{equation*}
\begin{aligned}
\tfrac12\mathcal L_e^{(i)}+\tfrac12\mathcal L_e^{(j)}
&=-\frac{1}{2n_i}\sum_{r=1}^{n_i}\log\{1-q_{ij}^{\rm bal}(z_{ir})\}\\
&\quad-\frac{1}{2n_j}\sum_{r=1}^{n_j}\log q_{ij}^{\rm bal}(z_{jr}).
\end{aligned}
\end{equation*}
This equal weighting removes the sample-proportion offset, so the population logit targets only the pairwise density ratio.
Moreover, each edgewise likelihood requires no graph-wide normalization because it depends only on the endpoint scores.
Summing the edgewise terms therefore produces an implementable conditional composite likelihood rather than the global source likelihood in~\eqref{eq:fully-profiled-el}.

\subsection{Balanced graph-local objective used in training}
\label{app:weighted-graph-objective}

Our objective is now to combine label prediction with the edgewise source likelihood in a client-local form whose average assigns the intended weight to every task and edge term.
The main text presents these components separately to distinguish prediction from source classification, whereas the implementation combines them using the weighting derived below.
Client $u$ maintains a local representation $g_{\theta_u}$ and evaluates $z_{ur}:=g_{\theta_u}(x_{ur})$ on its own examples.
The local parameters need not coincide after a finite number of neighbor-mixing rounds; setting all $\theta_u=\theta$ recovers the common-representation model used in the derivation above.

We now combine the endpoint source loss in~\eqref{eq:local-edge-loss} with the conditional-label probabilities in~\eqref{eq:local-predictive-model}.

We next choose the scaling that makes the client-local losses aggregate to the desired graph-level objective.
Let $\lambda\in(0,1]$ denote the weight assigned to task prediction, leaving weight $1-\lambda$ for source classification.
To weight the $m$ task losses uniformly and the $|E|$ edge losses uniformly after averaging over clients, set
\begin{equation}
\kappa:=\frac{m(1-\lambda)}{2|E|}.
\label{eq:balanced-edge-scale}
\end{equation}
Let $a_i=(\vartheta_i,b_i)$ denote the routing potential owned by client $i$.
The resulting client-local contribution is
\begin{equation}
\mathcal L_i
=-\frac{\lambda}{n_i}\sum_{r=1}^{n_i}
\log P_i^{Y\mid X}(\{y_{ir}\}\mid x_{ir};\theta_i,\omega_i)
+\kappa\sum_{j\in\mathcal N(i)}\mathcal L_{\{i,j\}}^{(i)},
\label{eq:edgedrm-local-objective}
\end{equation}
and averaging these contributions over clients yields
\begin{equation}
\mathcal L_G
=\frac{1}{m}\sum_i\mathcal L_i
=-\frac{\lambda}{m}\sum_i\frac1{n_i}\sum_{r=1}^{n_i}
\log P_i^{Y\mid X}(\{y_{ir}\}\mid x_{ir};\theta_i,\omega_i)
+(1-\lambda)/(2|E|)
\sum_{e=\{i,j\}\in E}
\{\mathcal L_e^{(i)}+\mathcal L_e^{(j)}\}.
\label{eq:edgedrm-graph-objective}
\end{equation}
Thus, each endpoint loss is a mean over its local data, the two endpoints of every edge receive equal weight, and all edges receive equal weight without an additional degree normalization.
The reported experiments use $\lambda=0.9$, while $\lambda=1$ gives the task-only ablation.
For fixed features, the prediction-head and routing-potential subproblems separate.

To optimize~\eqref{eq:edgedrm-graph-objective} using only neighbor communication, each endpoint differentiates every incident edge loss with respect to both endpoint potentials and returns the neighbor-owned component.
The resulting unregularized routing-potential gradient received by owner $i$ is
\begin{equation*}
G_i=\nabla_{a_i}\mathcal L_i+
\kappa\sum_{j\in\mathcal N(i)}
\nabla_{a_i}\mathcal L_{\{i,j\}}^{(j)}.
\end{equation*}
This expression equals $\nabla_{a_i}\sum_u\mathcal L_u$ in unrestricted potential coordinates when all endpoint contributions are evaluated at the same local-step snapshot.
For the reference-constrained theoretical update, $a_1$ remains fixed at zero and only the components $G_i$ for $i\geq2$ are used.
The private head receives $\nabla_{\omega_i}\mathcal L_i$, while the frozen encoder receives no gradient.

The balanced endpoint likelihood estimates the pairwise density ratio without incorporating the sample-count prior.
Because the global source posterior includes that prior, inference recovers the appropriate source odds by adding it to the learned comparison:
\begin{equation*}
\log\{s_j(z)/s_i(z)\}
=\log(\rho_j/\rho_i)+\log\{\varphi_j(z)/\varphi_i(z)\}
=\log(n_j/n_i)+(b_j-b_i)+(\vartheta_j-\vartheta_i)^\top z.
\end{equation*}
Therefore, training excludes the sample-count prior and inference adds $\log(n_j/n_i)$ exactly once.

Finally, when the frozen local representations differ, the fitted reverse-edge scores need not be antisymmetric.
Section~\ref{sec:theory} separates this representation-disagreement error from coefficient-estimation error.

\section{Proofs and Additional Theory}
\label{app:proofs}

Appendix~\ref{app:parameter-proof} proves the coefficient-estimation guarantee, Appendix~\ref{app:source-odds-proof} derives the comparison identities, and Appendix~\ref{app:structural-proof} proves the referral guarantee.
For the routing arguments, fix the positive training counts and query $(x,s)$.
We use the route $\mathcal R_L(x,s)=(\widehat s_0=s,\ldots,\widehat s_T)$ from \eqref{eq:budgeted-route}, with terminal client $\widehat s_T$, and the scores $u_t$ and weights $w_t$ from Section~\ref{sec:referral-history}.
Empty sums are zero and $\max\varnothing=-\infty$.

\subsection{Proof of Theorem~\ref{thm:parameter-estimation}}
\label{app:parameter-proof}

\textbf{Statistical conditions.}
Fix a connected graph with $m\ge2$, the number of classes, the representation and coefficient dimensions, and the task weight $\lambda\in(0,1)$.
The common representation is fixed independently of the fitting samples.
Within each client the observations are i.i.d. from a fixed distribution $P_i^{X,Y}$, and different clients are independent.
The deterministic allocations satisfy $\rho_{i,N}:=n_i/N\to\rho_i^*>0$; for a fixed $N$, these are exactly the proportions $\rho_i$ used by the inference-time prior in Section~\ref{sec:edgedrm-optimization}.
The conditional-label model and the common-reference DRM are correctly specified by the same finite coefficient vector $\zeta^\star$ for all $N$.
For identifiability, we use $F=P_1^Z$ and fix $b_1=0$ and $\vartheta_1=0$, so that $\varphi_1\equiv1$.
In identifiable coordinates the limiting population objective defined below has a unique interior minimizer $\zeta^\star$.
On a compact convex parameter region containing this point in its interior, the observation losses and their first two derivatives have integrable envelopes; the score at $\zeta^\star$ has finite second moment, and the population Hessian is positive definite.
Losses are measurable and twice continuously differentiable on a neighborhood of the region, and empirical minimizers are measurable.
These compact-region regularity conditions apply directly to the balanced training losses below.
The optimization conditions below specify existence of a finite empirical minimizer and containment of the iterates.

\textbf{Observation contributions.}
We decompose the graph objective $\mathcal L_G$ in \eqref{eq:edgedrm-graph-objective} into observation-level contributions.
For a generic observation $O=(X,Y)$, let $z=g_\theta(X)$.
For candidate coefficients $\zeta$, write
\[
q_{ij}^{\rm bal}(z;\zeta)
=\left[1+\exp\{-(b_j-b_i)-(\vartheta_j-\vartheta_i)^\top z\}\right]^{-1}.
\]
At $\zeta=\zeta^\star$, the parameterized quantity $q_{ij}^{\rm bal}(z;\zeta)$ reduces to the main-text posterior $q_{ij}^{\rm bal}(z)$.
The density-tilt coefficients retain their definition in \eqref{eq:common-reference-drm}; no sample-count offset appears in this training logit.
For an edge $e=\{v,w\}$, orient its endpoints so that $v<w$.
The observation-level contribution needed for the asymptotic analysis is
\[
\begin{aligned}
\ell_{G,i}(O;\zeta)
&=-\frac{\lambda}{m}\log P_i^{Y\mid X}(\{Y\}\mid X;\theta,\omega_i)\\
&\quad-\frac{1-\lambda}{2|E|}
\sum_{\substack{e=\{v,w\}\ni i\\v<w}}
\left[
\mathbf1\{i=v\}\log\{1-q_{vw}^{\rm bal}(g_\theta(X);\zeta)\}
+\mathbf1\{i=w\}\log q_{vw}^{\rm bal}(g_\theta(X);\zeta)
\right],\\
\mathcal L_G(\zeta)
&=\sum_i\frac1{n_i}\sum_{r=1}^{n_i}\ell_{G,i}(O_{ir};\zeta),
\qquad
R_G(\zeta)=\sum_i\mathbb E_i\ell_{G,i}(O_{i1};\zeta),
\end{aligned}
\]
Here $O_{ir}=(X_{ir},Y_{ir})$, and $\mathbb E_i$ denotes expectation under $P_i^{X,Y}$.
All terms evaluated on the same observation remain inside the same $\ell_{G,i}$.
Thus $\mathbb E\mathcal L_G=R_G$ for every allocation: the sample proportions affect the estimator's variance and the inference-time prior, but not the balanced population training objective.

\textbf{Identifiable coordinates and the gradient update.}
The routing coordinates are the free blocks $a_i=(\vartheta_i,b_i)$ for $i=2,\ldots,m$, with $a_1=0$ held fixed.
For each private head, retain $\sum_k\alpha_{ik}=0$ and $\sum_k\beta_{ik}=0$ and use orthonormal coordinates on this class-contrast subspace.
All parameter derivatives and norms in Theorem~\ref{thm:parameter-estimation} are taken in these coordinates.
All unqualified vector norms below denote the Euclidean norm.
Choosing client 1 as the density reference preserves every pairwise ratio: from any common baseline $F_0$ with coefficients $(b_i^0,\vartheta_i^0)$, take $F=P_1^Z$ and $(b_i,\vartheta_i)=(b_i^0-b_1^0,\vartheta_i^0-\vartheta_1^0)$.
Then $dP_i^Z/dF=\exp\{b_i+\vartheta_i^\top z\}$ remains a normalized density.
The private-head gradient sums to zero across classes, so common-step unpenalized updates preserve its class-contrast subspace.
The analyzed routing update acts only on the free blocks $a_2,\ldots,a_m$ and keeps $a_1=0$.

We analyze synchronous, unpenalized plain SGD in these free coordinates for Stage II.
Write $\mathcal L_i$ and $\mathcal L_e^{(u)}$ for the method's client-local and endpoint-normalized losses.
For each free routing-potential owner $i\ne1$, the gradient is exactly
\[
G_i^{a}
=\nabla_{a_i}\mathcal L_i
 +\kappa\sum_{j\in\mathcal N(i)}
 \nabla_{a_i}\mathcal L_{\{i,j\}}^{(j)},
\qquad \kappa=\frac{m(1-\lambda)}{2|E|}.
\]
The head owner receives $G_i^{\omega}=\nabla_{\omega_i}\mathcal L_i$, expressed in the head coordinates above.
Client 1 still contributes its task gradient and the source gradients of its observations to neighboring free routing blocks.
With the same snapshot for every endpoint contribution, concatenating the free routing blocks and all head-coordinate blocks gives $\nabla\sum_i\mathcal L_i=m\nabla\mathcal L_G$, including the gradients returned by neighbors.
Divide the assembled gradient by $m$; equivalently, a step of size $\eta_r$ on the normalized gradient uses step size $\eta_r/m$ on the raw owner gradients.
The analyzed update is
\begin{equation}
\zeta^{(r+1)}=\zeta^{(r)}-\eta_r\widehat G_r,
\qquad
\mathbb E[\widehat G_r\mid\mathcal F_r]
=\nabla\mathcal L_G(\zeta^{(r)}),
\label{eq:stage-two-sgd}
\end{equation}
where the data and initialization are included in the optimization history $\mathcal F_r$.
At each update, independently across clients, sample $B_i^{\rm mb}$ local indices uniformly with replacement and average $\nabla\ell_{G,i}$ over them, then sum these averages over $i$.
This gives exactly the conditional mean in \eqref{eq:stage-two-sgd}.
If $\widehat P_i$ is the empirical distribution at client $i$, the conditional covariance of this normalized mini-batch gradient is
\[
\sum_i\frac1{B_i^{\rm mb}}
 \operatorname{Var}_{\widehat P_i}
       \{\nabla\ell_{G,i}(O;\zeta^{(r)})\}.
\]
Its trace is the conditional noise second moment.
This covariance includes the correlations between incident-edge losses evaluated on the same observation.
A variance bound for raw owner gradients must be divided by $m^2$ to give the corresponding normalized bound.
Other samplers are covered only when the stated conditional unbiasedness and joint noise bound hold.

\textbf{Optimization conditions.}
The optimization analysis uses the following finite-minimizer, curvature, and bounded-iterate conditions.
On a data-and-initialization event whose probability tends to one, let $\widehat\zeta_N$ be a finite empirical minimizer with zero coordinate gradient, and let a convex region $\mathcal K_N$ contain this minimizer and the initial iterate, with
\[
\mu I\preceq\nabla^2\mathcal L_G(\zeta)\preceq\mu^{-1}I,
\qquad 0<\mu\le1,
\qquad \zeta\in\mathcal K_N.
\]
Any uniform positive lower and finite upper curvature bounds admit this form after choosing $\mu$ sufficiently small.
Assume the conditional noise second moment is at most $\sigma^2$ while the iterate is in this region.
The constants are uniform in $N$, and $\|\zeta^{(0)}-\widehat\zeta_N\|=O_p(1)$.
As in the bounded-iterate premise, all iterates up to the analyzed budget must remain in the region, either almost surely conditional on these data or with probability tending to one.
For high-probability containment, the proof uses the first exit time from $\mathcal K_N$.
On a bounded-logit region, positive-definite augmented-feature Gram matrices for every task head and balanced edge mixture, together with graph connectivity, are sufficient for positive curvature in the free reference-client and private-head coordinates.
Existence of a finite minimizer and containment of the iterates are separate assumptions; complete separation and unidentified design directions are excluded.
The curvature condition is local to $\mathcal K_N$.

\begin{proof}[Proof of Theorem~\ref{thm:parameter-estimation}]
\textbf{Step 1: A common population minimizer.}
Under the correctly specified DRM, the main-text endpoint-source posterior is
\[
q_{ij}^{\rm bal}(z)
=\frac{\varphi_j(z)}{\varphi_i(z)+\varphi_j(z)}.
\]
Let $\chi_e(z)$ be the gradient in identifiable coordinates of $(b_j-b_i)+(\vartheta_j-\vartheta_i)^\top z$.
It does not depend on the parameter value, since these logits are affine.
At parameter $\zeta$, the source-label gradient for an observation with representation $z$ from endpoint $u$ is $\{q_{ij}^{\rm bal}(z;\zeta)-\mathbf1\{u=j\}\}\chi_e(z)$.
Consequently, at $\zeta^\star$,
\[
\begin{aligned}
&\tfrac12\mathbb E_i[q_{ij}^{\rm bal}(Z)\chi_e(Z)]
 +\tfrac12\mathbb E_j[(q_{ij}^{\rm bal}(Z)-1)\chi_e(Z)]\\
&\quad=\int\bigl\{
   \tfrac12\{\varphi_i(z)+\varphi_j(z)\}q_{ij}^{\rm bal}(z)
       -\tfrac12\varphi_j(z)\bigr\}\chi_e(z)\,dF(z)=0.
\end{aligned}
\]
The two equally weighted endpoint means therefore cancel exactly.
For the task term, write $p_i(z;\zeta)$ for the model class-probability vector.
Let $A_i(z)$ be the Jacobian of the $K$ task logits with respect to $\zeta$, and let $\mathbf e_Y$ denote the class indicator vector.
Correct specification gives
\[
\mathbb E_i\!\left[
\nabla_\zeta\{-\log P_i^{Y\mid X}(\{Y\}\mid X;\theta,\omega_i)\}
\mid Z\right]
=A_i(Z)^\top\{p_i(Z;\zeta^\star)-\mathbb E_i[\mathbf e_Y\mid Z]\}=0.
\]
Thus, with $\mathcal S_i(O)=\nabla\ell_{G,i}(O;\zeta^\star)$ and $\overline{\mathcal S}_i=\mathbb E_i\mathcal S_i(O)$, the population-score identity is
\[
\nabla R_G(\zeta^\star)=\sum_i \overline{\mathcal S}_i=0,
\qquad\text{while generally }\overline{\mathcal S}_i\ne0.
\]
Convexity already makes $\zeta^\star$ a minimizer.
More explicitly, $R_G(\zeta)-R_G(\zeta^\star)$ is the sum of
\[
\frac{\lambda}{m}\sum_i\mathbb E_i
 \operatorname{KL}\{p_i(Z;\zeta^\star)\,\|\,p_i(Z;\zeta)\}
\]
and
\[
\frac{1-\lambda}{|E|}\sum_{e=\{i,j\}}
\int\tfrac12\{\varphi_i(z)+\varphi_j(z)\}
 \operatorname{KL}\!\left\{
 \operatorname{Ber}(q_{ij}^{\rm bal}(z))\,\|\,
 \operatorname{Ber}(q_{ij}^{\rm bal}(z;\zeta))\right\}\,dF(z).
\]
All weights are positive.
Equality requires the correct task probabilities and density-score differences on every edge.
The stated identification condition therefore makes $\zeta^\star$ the unique minimizer of the balanced population objective $R_G$.

\textbf{Step 2: Score covariance.}
The endpoint score second moments follow from the information identity.
At the true balanced posterior,
\[
\begin{aligned}
&\tfrac12\sum_{u\in e}
  \mathbb E_u\!\left[
  (q_{ij}^{\rm bal}(Z)-\mathbf1\{u=j\})^2\|\chi_e(Z)\|^2\right]\\
&\quad=\int\tfrac12\{\varphi_i(z)+\varphi_j(z)\}
    q_{ij}^{\rm bal}(z)\{1-q_{ij}^{\rm bal}(z)\}
                  \|\chi_e(z)\|^2\,dF(z)<\infty.
\end{aligned}
\]
The right-hand side is the trace of the edge's population Hessian before its positive graph-level coefficient.
All Hessian contributions are positive semidefinite, so finiteness follows from the stated Hessian envelope.
Each endpoint term is finite.
The task-score second moment follows from the stated moment condition (and from its correctly specified softmax information identity).
Uniform integrable gradient envelopes also follow by expanding each gradient from $\zeta^\star$ over the compact region using its Hessian envelope.

Here are the exact score and Hessian for the balanced objective.
All probabilities in the following displays are evaluated at $\zeta^\star$ and the displayed representation:
\[
\begin{aligned}
\mathcal S_i(O)
&=\frac{\lambda}{m}A_i(Z)^\top\{p_i(Z;\zeta^\star)-\mathbf e_Y\}\\
&\quad+\frac{1-\lambda}{2|E|}\sum_{e=\{v,w\}\ni i,\ v<w}
 (q_{vw}^{\rm bal}(Z)-\mathbf1\{i=w\})\chi_e(Z),\\
H_G:=\nabla^2R_G(\zeta^\star)
&=\frac{\lambda}{m}\sum_i\mathbb E_i\!
 \left[A_i(Z)^\top
 \{\operatorname{diag}(p_i(Z;\zeta^\star))-p_i(Z;\zeta^\star)p_i(Z;\zeta^\star)^\top\}
 A_i(Z)\right]\\
&\quad+\frac{1-\lambda}{|E|}\sum_{e=\{i,j\}}
 \int\tfrac12\{\varphi_i(z)+\varphi_j(z)\}
 q_{ij}^{\rm bal}(z)\{1-q_{ij}^{\rm bal}(z)\}
                 \chi_e(z)\chi_e(z)^\top\,dF(z).
\end{aligned}
\]
Set
\[
\begin{aligned}
\Omega_i&=\mathbb E_i\!
 [(\mathcal S_i(O)-\overline{\mathcal S}_i)(\mathcal S_i(O)-\overline{\mathcal S}_i)^\top],\\
J_{G,N}&=\operatorname{Var}\!
       \{\sqrt N\nabla\mathcal L_G(\zeta^\star)\}
       =\sum_i\frac{N}{n_i}\Omega_i,\\
J_G&=\lim_N J_{G,N}
       =\sum_i(\rho_i^*)^{-1}\Omega_i,\\
\Sigma_G&=H_G^{-1}J_GH_G^{-\top}.
\end{aligned}
\]
Here $\Omega_i$ is centered within client stratum $i$, whose score mean $\overline{\mathcal S}_i$ may be nonzero.
If $\mathcal S_e^{(i)}=\{q_{vw}^{\rm bal}(Z)-\mathbf1\{i=w\}\}\chi_e(Z)$ for $e=\{v,w\}$, $v<w$, then the routing part of $\Omega_i$ contains the full double sum
\[
\left(\frac{1-\lambda}{2|E|}\right)^2
 \sum_{e,f\ni i}
 \operatorname{Cov}_i(\mathcal S_e^{(i)},\mathcal S_f^{(i)}).
\]
The terms with $e\ne f$ account for the reuse of each observation on incident edges.
The full vector definition of $\Omega_i$ also retains task contributions and their covariance with routing contributions.
Under correct task specification, the task--routing cross-covariances vanish because the task score has conditional mean zero given $Z$.
In general $J_G$ is not the same as $H_G$.

\textbf{Step 3: Statistical consistency and asymptotic normality.}
The uniform-law conditions give, on the compact parameter region,
\[
\sup_\zeta|\mathcal L_G(\zeta)-R_G(\zeta)|\to_p0,
\]
and the analogous uniform convergence for the Hessians.
Uniqueness and interior positive curvature yield $\widehat\zeta_N\to_p\zeta^\star$.
On any sufficiently small sphere about $\zeta^\star$, the population objective exceeds its value at $\zeta^\star$ uniformly, and so does the empirical objective with probability tending to one.
Convexity then excludes a better empirical minimizer outside that sphere, by considering the segment to $\zeta^\star$.

The exact cancellation of total score means gives
\[
\mathbb G_N:=\sqrt N\nabla\mathcal L_G(\zeta^\star)
=\sum_i\frac{\sqrt N}{n_i}\sum_{r=1}^{n_i}
            \{\mathcal S_i(O_{ir})-\overline{\mathcal S}_i\}.
\]
The centered summands have independent client strata.
For each client the ordinary finite-variance multivariate i.i.d. central limit theorem applies to $\mathcal S_i-\overline{\mathcal S}_i$.
Since $\sqrt{N/n_i}\to(\rho_i^*)^{-1/2}$ and the number of clients is fixed, $\mathbb G_N\xrightarrow{d}\mathcal N(0,J_G)$.
Thus the allocation proportions enter the asymptotic covariance, although they do not enter the balanced population objective.

On the event that the empirical minimizer is interior, its exact first-order condition gives
\[
0=\nabla\mathcal L_G(\zeta^\star)
       +\widehat H_N(\widehat\zeta_N-\zeta^\star),\qquad
\widehat H_N=\int_0^1\nabla^2\mathcal L_G
 \{\zeta^\star+t(\widehat\zeta_N-\zeta^\star)\}\,dt\to_p H_G.
\]
Consequently,
\begin{equation}
\sqrt N(\widehat\zeta_N-\zeta^\star)
=-H_G^{-1}\sqrt N\nabla\mathcal L_G(\zeta^\star)+o_p(1)
\xrightarrow{d}\mathcal N(0,\Sigma_G).
\label{eq:composite-estimator-clt}
\end{equation}
\textbf{Step 4: Optimization error.}
Take $\eta_r=2/\{\mu(r+r_0)\}$ with $r_0\ge2/\mu^2$, so that $r_0\ge2$ and $\eta_r\le\mu$.
An integral Hessian expansion along the segment between $\zeta$ and $\widehat\zeta_N$ gives, for $0<\eta^{\rm opt}\le\mu$,
\[
\|\zeta-\eta^{\rm opt}\nabla\mathcal L_G(\zeta)-\widehat\zeta_N\|
\le(1-\mu\eta^{\rm opt})\|\zeta-\widehat\zeta_N\|.
\]
Conditional unbiasedness makes the cross term with the gradient noise zero.
Let $\tau_N$ be the first iterate index outside $\mathcal K_N$ and put
\[
v_r=\mathbb E_{\rm opt}\!
 \left[\|\zeta^{(r)}-\widehat\zeta_N\|^2
                      \mathbf1\{\tau_N>r\}\right].
\]
This is the usual mean-square error when containment holds almost surely.
The event $\{\tau_N>r\}$ is measurable with respect to $\mathcal F_r$.
Since $\{\tau_N>r+1\}\subseteq\{\tau_N>r\}$,
\[
v_{r+1}\le\mathbb E_{\rm opt}\!\left[
\mathbf1\{\tau_N>r\}\,
\mathbb E\!\left[
\|\zeta^{(r+1)}-\widehat\zeta_N\|^2\mid\mathcal F_r
\right]\right].
\]
Applying conditional unbiasedness on the $\mathcal F_r$-measurable event inside this expectation gives
\[
v_{r+1}\le\left(1-\frac{2}{r+r_0}\right)^2v_r
             +\frac{4\sigma^2}{\mu^2(r+r_0)^2}
\le\left(1-\frac{2}{r+r_0}\right)v_r
             +\frac{4\sigma^2}{\mu^2(r+r_0)^2}.
\]
The last inequality uses $r+r_0\ge2$.
Induction gives
\begin{equation}
v_R\le\frac{C_{\rm opt}}{R+r_0},\qquad
C_{\rm opt}=\max\left\{r_0v_0,\frac{4\sigma^2}{\mu^2}\right\}.
\label{eq:stage-two-convergence}
\end{equation}
Indeed, writing $d=r+r_0$, if $v_r\le C/d$ with $C\ge4\sigma^2/\mu^2$, then $v_{r+1}\le C/d-(2C-4\sigma^2/\mu^2)/d^2\le C/d-C/d^2\le C/(d+1)$.
The initial condition starts the induction.
Since $C_{\rm opt}=O_p(1)$, conditional Markov's inequality and the stated containment condition $\Pr(\tau_N\le R)\to0$ imply
\[
\zeta^{(R)}-\widehat\zeta_N=O_p(R^{-1/2}).
\]
With high-probability containment, \eqref{eq:stage-two-convergence} bounds the stopped error $v_R$; the exit probability is controlled separately by the containment assumption.

\textbf{Step 5: The SGD output.}
Combining this expansion with \eqref{eq:stage-two-convergence} yields
\[
\|\zeta^{(R)}-\zeta^\star\|
\le\|\widehat\zeta_N-\zeta^\star\|
  +\|\zeta^{(R)}-\widehat\zeta_N\|
=O_p(N^{-1/2}+R^{-1/2}),
\]
which proves \eqref{eq:coefficient-total-error}.
If $R/N\to\infty$, the stopped mean-square bound and the containment condition imply $\sqrt N(\zeta^{(R)}-\widehat\zeta_N)=o_p(1)$, so \eqref{eq:composite-estimator-clt} proves \eqref{eq:algorithm-clt}.
This proves Theorem~\ref{thm:parameter-estimation}.

\end{proof}

\textbf{Full-gradient updates.}
For full gradients and $0<\eta^{\rm opt}\le\mu$, the same curvature and iterate-containment conditions give
\begin{equation}
\|\zeta^{(R)}-\widehat\zeta_N\|
\le(1-\mu\eta^{\rm opt})^R\|\zeta^{(0)}-\widehat\zeta_N\|.
\label{eq:stage-two-full-gradient}
\end{equation}
Adding the statistical error gives $O_p\{N^{-1/2}+(1-\mu\eta^{\rm opt})^R\}$.
Thus $\sqrt N(1-\mu\eta^{\rm opt})^R\to0$ suffices for the same normal limit.
For fixed $c>0$, it is sufficient to take $R\ge(1/2+c)(\mu\eta^{\rm opt})^{-1}\log N$; when $\mu\eta^{\rm opt}=1$ the contraction is zero after one update.

\subsection{Balanced source fitting and prior-weighted comparison identities}
\label{app:source-odds-proof}

For $e=\{i,j\}$, $i<j$, the source-label distribution in the edge risk assigns probability $1/2$ to each endpoint.
Bayes' rule under this balanced mixture gives
\[
q_{ij}^{\rm bal}(z)
=\frac{\varphi_j(z)}{\varphi_i(z)+\varphi_j(z)},\qquad
\operatorname{logit}q_{ij}^{\rm bal}(z)
=\log\frac{\varphi_j(z)}{\varphi_i(z)}.
\]
The equally weighted binary cross-entropy is minimized by this posterior, which contains no sample-count term and agrees with~\eqref{eq:logitqij}.
Correct specification, connectivity, and the stated identifiable-design condition identify the routing coefficients relative to the fixed client 1.
Correctly specified task cross-entropies identify the private-head contrasts.
At inference, adding $\log(\rho_j/\rho_i)$ to the fitted balanced logit converts the learned density ratio into the desired prior-weighted source odds.
Thus the sample-count prior is absent from training and appears exactly once in deployment.

With a common fitted representation $z=g_{\widehat\theta}(x)$,
\[
u_t(x,s)=\log\frac{\rho_{\widehat s_t}}{\rho_s}
 +(\widehat b_{\widehat s_t}-\widehat b_s)
 +(\widehat\vartheta_{\widehat s_t}-\widehat\vartheta_s)^\top z.
\]
The identity holds for any fitted coefficients with a common representation.
If the DRM holds for this representation and its density-score differences are recovered exactly, then
\[
e^{u_t(x,s)}=\frac{\rho_{\widehat s_t}\varphi_{\widehat s_t}(z)}
 {\rho_s\varphi_s(z)},\qquad
w_t(x,s)=\frac{\rho_{\widehat s_t}\varphi_{\widehat s_t}(z)}
 {\sum_{\ell=0}^T\rho_{\widehat s_\ell}\varphi_{\widehat s_\ell}(z)}.
\]
The history weights normalize the prior-weighted densities over the visited clients.
With different local encoders, the count terms still telescope but the remaining score sum can be path dependent.

For a reference representation $g_\theta(x)$ and its coefficients, adding and subtracting the fitted comparison at this representation gives
\begin{equation}
\begin{aligned}
\widehat\Delta_{ij}(x)-\Delta_{ij}(x)
&=e_{ij}^{\rm coef}(x)
 +(\widehat\vartheta_j-\widehat\vartheta_i)^\top
   \{g_{\widehat\theta_i}(x)-g_\theta(x)\},\\
e_{ij}^{\rm coef}(x)
&=(\widehat b_j-b_j)-(\widehat b_i-b_i)\\
&\quad+\{(\widehat\vartheta_j-\vartheta_j)
 -(\widehat\vartheta_i-\vartheta_i)\}^\top g_\theta(x).
\end{aligned}
\label{eq:score-error-decomposition}
\end{equation}
Both scores use the same finite-$N$ count prior, so their known offsets cancel exactly, irrespective of how quickly the proportions converge.
If the coefficient term is bounded by $\epsilon_a$, the fitted slope contrast by $M_\vartheta$, and the representation difference by $\delta_g$, the score error is at most $\epsilon_a+M_\vartheta\delta_g$.
For common representations and the same count prior, each comparison is affine in the identifiable coefficients.
Using client 1 as the reference for both fitted and true coefficients, its error is bounded by $2\|(g_\theta(x)^\top,1)^\top\|\max_i\|\widehat a_i-a_i\|$.
Theorem~\ref{thm:parameter-estimation} therefore proves \eqref{eq:comparison-error-bound} at a fixed query; bounded representations give a uniform version.
For different local encoders,~\eqref{eq:score-error-decomposition} remains an algebraic decomposition.
However, these encoders also change the representation distributions used for coefficient fitting.
The coefficient term may therefore contain effects of this change in addition to statistical estimation error, and Theorem~\ref{thm:parameter-estimation} does not automatically bound it.
Extending the rate to this setting requires controlling both the coefficient-fitting error and the representation-disagreement term.

The reverse-edge discrepancy is
\begin{equation}
\widehat\Delta_{ij}(x)+\widehat\Delta_{ji}(x)
=(\widehat\vartheta_j-\widehat\vartheta_i)^\top
 \{g_{\widehat\theta_i}(x)-g_{\widehat\theta_j}(x)\}.
\label{eq:reciprocity-defect}
\end{equation}
Here the two count offsets and the intercept differences cancel even with different encoders.
Let $\Delta_{ij}^{\rm pop}(x)$ denote the comparison obtained by replacing the training proportions $\rho_i$ with positive reference-population proportions $\rho_i^{\rm pop}$.
Its additional prior error is $\log(\rho_j/\rho_j^{\rm pop})-\log(\rho_i/\rho_i^{\rm pop})$, so
\begin{equation}
|\widehat\Delta_{ij}(x)-\Delta_{ij}^{\rm pop}(x)|
\le\epsilon_a+M_\vartheta\delta_g+
2\max_i|\log(\rho_i/\rho_i^{\rm pop})|.
\label{eq:prior-score-error}
\end{equation}
The last term measures the difference between the training-source prior and the reference population prior.

\subsection{Proof of Theorem~\ref{thm:structural-gains} and structural examples}
\label{app:structural-proof}

\textbf{From estimation error to comparison tolerance.}
Under Theorem~\ref{thm:parameter-estimation}, \eqref{eq:comparison-error-bound} bounds the maximum comparison error over all graph edges at the fixed query $x$.
By the definition of $O_p$, for every fixed $\delta\in(0,1)$ there is a finite constant $C_\delta>0$ such that, for all sufficiently large $N,R$, the event
\begin{equation}
\mathcal E_\delta:=\left\{
\max_{\{i,j\}\in E}|\widehat\Delta_{ij}(x)-\Delta_{ij}(x)|
\le C_\delta\left(N^{-1/2}+R^{-1/2}\right)=\epsilon
\right\}
\label{eq:realized-edge-error}
\end{equation}
has probability at least $1-\delta$ over the training data and SGD randomness.
The constant may depend on the fixed query, graph, and model, but not on $N$, $R$, or the realized route.
With the common representation, reverse comparisons are negatives of each other, so this event controls both edge directions.
It therefore controls every candidate edge on the adaptively selected route without an additional probability bound for individual steps.
All remaining arguments are deterministic on $\mathcal E_\delta$; the reference potentials and $H_\epsilon$ are fixed given the training counts and query.
For an independently random frozen representation, the argument is conditional on that representation, as in Theorem~\ref{thm:parameter-estimation}.

\textbf{Definition of the depth.}
Suppress $x$ in potentials and let $H_\epsilon$ be the largest integer $q\ge0$ for which there exists a path $(i_0=s,i_1,\ldots,i_q)$ satisfying, at every step $r<q$,
\[
i_{r+1}\in\mathcal N(i_r),\qquad
h_{i_{r+1}}>h_{i_r},\qquad
h_{i_{r+1}}\ge\max_{j\in\mathcal N(i_r)}h_j-2\epsilon.
\]
The one-node path is permitted.
Strict increase excludes revisits, so $0\le H_\epsilon\le m-1$.
The maximum is attained because the graph is finite.
Throughout the proof, $H_\epsilon$ denotes this exact maximum for the given graph, potentials, entry, and tolerance.

\begin{proof}[Proof of Theorem~\ref{thm:structural-gains}]
Work on $\mathcal E_\delta$ and put $d_\epsilon=\gamma-2\epsilon>0$.

\textbf{Step 1: Accepted moves and stopping.}
Every accepted edge satisfies
$\Delta_{ij}\ge\widehat\Delta_{ij}-\epsilon>\eta-\epsilon\ge0$.
Hence every previously visited client has smaller potential than the current client, and every positive-gain neighbor is unvisited.
At a nonoptimal client $i$, let $j^\star$ maximize the true neighboring gain.
By \eqref{eq:minimum-ascent}, it is eligible and
$\widehat\Delta_{ij^\star}\ge\gamma-\epsilon>\eta$.
Thus the route cannot stop at $i$ while budget remains.
For the selected neighbor $j$,
\[
\Delta_{ij}\ge\widehat\Delta_{ij}-\epsilon
\ge\widehat\Delta_{ij^\star}-\epsilon
\ge\Delta_{ij^\star}-2\epsilon\ge d_\epsilon.
\]
This also gives $h_j\ge\max_{k\in\mathcal N(i)}h_k-2\epsilon$, so the realized route is admissible in the definition of $H_\epsilon$.
Consequently $T\le\min\{L,H_\epsilon\}$ and
\begin{equation}
\Gamma_L(x,s)\ge T(\gamma-2\epsilon).
\label{eq:uniform-ascent-gain}
\end{equation}
At a global maximizer, every candidate fitted score is at most $\epsilon\le\eta$, so no further move is accepted.
An early stop therefore occurs at a maximizer.

\textbf{Step 2: The budget bound.}
If $T=L$, \eqref{eq:uniform-ascent-gain} gives $\Gamma_L\ge Ld_\epsilon$.
If $T<L$, Step 1 gives $\Gamma_L=h^\star-h_s$.
Thus
\begin{equation}
\Gamma_L\ge\min\{h^\star-h_s,L(\gamma-2\epsilon)\}.
\label{eq:budgeted-ascent-gain}
\end{equation}

\textbf{Step 3: The total potential gap and ascent depth.}
Let $(i_0=s,\ldots,i_{H_\epsilon})$ attain the maximum in the definition of $H_\epsilon$.
Its terminal vertex is a maximizer: otherwise, a highest-potential neighbor has positive gain by \eqref{eq:minimum-ascent}, is unvisited by strict ascent, and extends the admissible path.
For each $r<H_\epsilon$, the near-best-neighbor condition gives
\[
h_{i_{r+1}}-h_{i_r}
\ge\max_{j\in\mathcal N(i_r)}(h_j-h_{i_r})-2\epsilon
\ge d_\epsilon.
\]
Telescoping along this path yields the separate total-gap bound
\[
h^\star-h_s
=\sum_{r=0}^{H_\epsilon-1}(h_{i_{r+1}}-h_{i_r})
\ge H_\epsilon d_\epsilon.
\]
Together with Step 2, this proves \eqref{eq:unified-referral-gain}:
\[
\Gamma_L\ge\min\{h^\star-h_s,Ld_\epsilon\}
\ge d_\epsilon\min\{L,H_\epsilon\},
\]

\textbf{Step 4: A sufficient budget for reaching a maximizer.}
Suppose $L\ge H_\epsilon$ and the terminal client is suboptimal.
Step 1 then implies budget exhaustion, so $T=L$.
Together with $T\le H_\epsilon$, this gives $T=L=H_\epsilon$.
A highest-potential neighbor would extend the realized admissible path, contradicting the definition of $H_\epsilon$.
Thus the terminal client is a maximizer.
The same arguments include $L=0$ and an optimal entry, and hold under any tie breaking among score maximizers.
\end{proof}

\textbf{Examples and the proportional-ascent alternative.}
The following sufficient conditions concern the prior-adjusted potentials at a fixed query.
Their conclusions hold on the comparison-error event used above.

\begin{assumption}[Uniform ascent]
\label{ass:uniform-ascent}
Every nonoptimal client has a neighbor with gain at least $\gamma>0$, as in \eqref{eq:minimum-ascent}.
\end{assumption}

\begin{assumption}[Referral hierarchy]
\label{ass:referral-hierarchy}
A spanning tree is oriented toward a maximizing root.
Each nonroot client $i$ has a parent $p_x(i)$ such that
\[
\Delta_{i,p_x(i)}\ge\gamma,\qquad
\Delta_{i,p_x(i)}-
 \max_{j\in\mathcal N(i)\setminus\{p_x(i)\}}\Delta_{ij}
\ge\kappa_{\rm hier}>0.
\]
\end{assumption}
When $\kappa_{\rm hier}>2\epsilon$, only the parent can belong to an admissible ascent path.
Thus $H_\epsilon$ is exactly the number of parent edges from $s$ to the root.
Under the theorem's threshold condition,
\begin{equation}
T=\min\{L,H_\epsilon\},\qquad \Gamma_L\ge\gamma\min\{L,H_\epsilon\}.
\label{eq:hierarchical-gain}
\end{equation}
The preferred-parent condition therefore identifies ascent depth with the distance to the root in this hierarchy.

\begin{assumption}[Mixture of expert types]
\label{ass:hospital-mixture}
Clients of the same type $\tau(i)$ have equal potential.
There is a best type $\tau^\star(x)$ with $h_j-h_i\ge\gamma>0$ whenever $\tau(j)=\tau^\star(x)$ and $\tau(i)\ne\tau^\star(x)$.
Every nonbest-type client has a best-type neighbor.
\end{assumption}
If $\gamma>\eta+\epsilon$ and $\eta\ge\epsilon$, then $\gamma>2\epsilon$.
Only best-type neighbors are admissible, so $H_\epsilon=1$ for a nonoptimal entry and $H_\epsilon=0$ for an optimal entry.
Therefore
\begin{equation}
\Gamma_L=(h^\star-h_s)\mathbf1\{L\ge1\}.
\label{eq:mixture-density-gain}
\end{equation}
Equal representation distributions and equal source priors within each type suffice for the equal-potential condition.

\begin{assumption}[Graph--PL ascent]
\label{ass:graph-pl}
There is $\kappa_{\rm PL}\in(0,1]$ such that each nonoptimal client satisfies $\max_{j\in\mathcal N(i)}\Delta_{ij}\ge\kappa_{\rm PL}(h^\star-h_i)$.
\end{assumption}
This is an alternative to uniform ascent: the available improvement scales with the remaining potential gap.
Under this assumption, the candidate error bound and $\eta\ge\epsilon$ alone imply
\begin{equation}
\Gamma_L\ge\{1-(1-\kappa_{\rm PL})^L\}
 \left[h^\star-h_s-\frac{\eta+\epsilon}{\kappa_{\rm PL}}\right]_+.
\label{eq:graph-pl-gain}
\end{equation}
To prove it, hold a stopped route at its terminal node until index $L$ only for the analysis, and put $d_t=h^\star-h_{\widehat s_t}$.
On a moving step, the preceding noisy-greedy inequality gives $d_{t+1}\le(1-\kappa_{\rm PL})d_t+2\epsilon$.
On a stopped step, the best neighboring gain is at most $\eta+\epsilon$, so $\kappa_{\rm PL}d_t\le\eta+\epsilon$ and $d_{t+1}=d_t\le(1-\kappa_{\rm PL})d_t+\eta+\epsilon$.
Using $2\epsilon\le\eta+\epsilon$ and solving the common recursion proves \eqref{eq:graph-pl-gain}.
In particular an early stop satisfies
\begin{equation}
T<L\quad\Longrightarrow\quad
h^\star-h_{\widehat s_T}\le\frac{\eta+\epsilon}{\kappa_{\rm PL}}.
\label{eq:graph-pl-stopping}
\end{equation}
When $T=L$, the sharper $2\epsilon$ numerator follows from the moving-step recursion.
Use $(1-\kappa_{\rm PL})^0=1$, including $\kappa_{\rm PL}=1$.
The holding steps are an analytical convention; $T$, the prediction history, and communication counts are those of the original route.

\section{Experiment Details}
\label{app:experiment-details}

The subsections below collect the data construction, communication graphs, optimization schedules, baseline provenance, sensitivity analyses, inference diagnostics, and real-data protocols used throughout Section~\ref{sec:experiments}.

\subsection{CIFAR Data, Graphs, Baselines, and Traffic Construction}
\label{app:frozen-data}

The benchmark uses the client construction from the released \texttt{FedDRM} code \citep{wang2026feddrm}.
CIFAR-10 has ten labels, CIFAR-20 maps the CIFAR-100 fine labels to 20 superclasses, and CIFAR-100 retains all 100 fine labels.
The official training and test collections are pooled before partitioning among eight clients using one of two label-skew constructions.
In the SPC construction, label-sorted data are divided into equal-sized, label-homogeneous shards, and each client receives five uniformly sampled shards for CIFAR-10 or 25 for CIFAR-20/100, yielding equal client dataset sizes.
In the Dirichlet construction, client-wise class proportions are randomly drawn from a Dirichlet distribution with concentration $0.3$, yielding heterogeneous label marginals and unequal client dataset sizes.
Each client's assigned images are shuffled and divided into 70\% training and 30\% evaluation data.
The latter is split into 20\% validation and 80\% test data.
Training, validation, and test indices are disjoint.

The eight deterministic client transformations combine a gamma-correction exponent in $\{0.6,1.4\}$, hue shift in $\{-0.1,0.1\}$, and saturation in $\{0.5,1.5\}$.
The CIFAR models use a GroupNorm ResNet-18 followed by a 256-dimensional task projection with LayerNorm and a local classifier.
Within each paired run, all clients share a random initialization; no pretrained \texttt{FedDRM} parameters are loaded.
Both CIFAR training phases retain the released random-resized crop to $32\times32$ pixels with scale $0.9$--$1.0$, followed by the fixed client color transformation and normalization.
Evaluation uses one seeded crop per image, shared across paired models and readouts.
Freezing fixes encoder parameters and buffers; it does not fix the augmented input at every training access.
The fixed evaluation view and non-source entry assignments are shared between each 800+200 model and its \texttt{DFedAvgM} control.
The source ID is used to construct the evaluation and for observer-only diagnostics; it is not given to the routing or history-weighting rule.

The \textsc{Path} has edges $\{\{i,i+1\}:1\leq i<8\}$, and the \textsc{Cycle} adds $\{8,1\}$.
The fixed \textsc{RR4} graph has neighbor sets
\begin{equation*}
\begin{split}
&\mathcal N(1)=\{2,3,4,5\},\quad
\mathcal N(2)=\{1,3,6,7\},\quad
\mathcal N(3)=\{1,2,4,8\},\quad
\mathcal N(4)=\{1,3,5,7\},\\
&\mathcal N(5)=\{1,4,6,8\},\quad
\mathcal N(6)=\{2,5,7,8\},\quad
\mathcal N(7)=\{2,4,6,8\},\quad
\mathcal N(8)=\{3,5,6,7\}.
\end{split}
\end{equation*}
The \textsc{Complete} graph contains all 28 undirected edges.
The sparse-graph movement budgets are the diameters 7, 4, and 2.
The primary \textsc{Complete} control uses $L=8$; $L=1$ is retained for the training-duration diagnostics.
Previously visited experts are excluded from subsequent moves.
No posttraining representation consensus is applied.

\paragraph{Thirty-two-client scaling.}
The CIFAR-10 scaling benchmark uses gamma-correction levels $\{0.6,1.0,1.4\}$, hue shifts $\{-0.15,0,0.15\}$, saturation levels $\{0.4,1.0,1.6\}$, and two-bit posterization for clients 28--32.
The pooled-data construction and within-client split remain the same.
The connected degree-four graph uses seed 82025, has 64 edges and diameter four, and retains the primary movement budget $L=8$.

\paragraph{Eight-client baseline measurements.}
For Table~\ref{tab:decentralized-comparison}, \texttt{Local} and \texttt{FedDRM} use 1000 rounds with ten local updates per round.
\texttt{DisPFL} uses 1000 rounds with five local updates per round and 50\% dynamic sparsity.
\texttt{FedSPD} uses 999 communication rounds with five local updates per round, followed by five final personalization updates, yielding 1000 round equivalents and 5000 local updates per client.
All four baselines are evaluated at fixed final checkpoints on the same seed-specific held-out test split used for \texttt{EdgeDRM} and \texttt{DFedAvgM}; Table~\ref{tab:decentralized-comparison} reports means and sample SDs over seeds 2025--2027.
\texttt{Local}, \texttt{DisPFL}, and \texttt{FedSPD} predict with the non-source entry model, whereas \texttt{FedDRM} uses its global client-selection head.
The round-equivalent budgets and evaluation splits are aligned, although the numbers of local updates differ.

\paragraph{Hospital-network construction.}
The 16/8/8 small/medium/large hospitals receive disjoint, exactly class-balanced subsets of the official CIFAR-10 training split with 250/1,000/4,000 examples each.
The layered graph has a cycle within each tier, one small--medium edge per small hospital, and a one-to-one medium--large matching (56 edges).
The shortcut graph adds one predetermined small--large edge per small hospital (72 edges).
Each graph uses its own shared-training prefix and frozen-encoder continuation.

The common diagnostic panel contains 20 official test images per class.
Each of the 200 distinct images is reused in 64 arrival records, yielding 12,800 records with exactly uniform entry counts across hospitals.
Seeds 2025--2027 vary training and entry assignments.
The prior-on and prior-off evaluations share the model, query panel, and arrival records; both exclude revisits and allow at most two referrals.
The panel is an existing diagnostic set rather than a newly blinded test set.
Workload quantities count the diagnostic calls made along these finite query paths.

\paragraph{Cumulative diagnostic workload.}
\label{app:hospital-workload}
Let $\mathcal V_q$ be the visited hospitals for arrival $q$, and let $\mathcal H_g$ contain the hospitals in tier $g$.
Each hospital supplies one prediction when visited, including the entry and terminal hospitals.
Because the route excludes revisits, its cumulative call counts are
\[
 C_i^{\rm call}=\sum_{q=1}^{Q}\mathbf 1\{i\in\mathcal V_q\},
 \qquad C_g^{\rm call}=\sum_{i\in\mathcal H_g}C_i^{\rm call},
 \qquad C_{\mathrm{all}}^{\rm call}=\sum_i C_i^{\rm call}
                       =\sum_{q=1}^{Q}(T_q+1),
\]
where $T_q$ is the number of moves and $Q=12{,}800$.
The reported tier share, peak individual share, and calls per arrival are, respectively,
\[
 S_g=\frac{C_g^{\rm call}}{C_{\mathrm{all}}^{\rm call}},
 \qquad P_{\max}=\frac{\max_i C_i^{\rm call}}{C_{\mathrm{all}}^{\rm call}},
 \qquad \overline C^{\rm call}=\frac{C_{\mathrm{all}}^{\rm call}}{Q}.
\]
Table~\ref{tab:hospital-traffic-details} reports both the large-tier share $S_{\mathrm{large}}$ and $100C_{\mathrm{large}}^{\rm call}/Q$, retaining the absolute large-hospital workload scale as the total number of calls changes.
It also reports $\overline C^{\rm call}$ and $P_{\max}$ to distinguish additional collaboration from concentration at any single hospital.
Each seed's counts and ratios are computed first, followed by their mean and sample SD across the three seeds.
Per-hospital counts reconstructed from the saved paths match the recorded classifier calls.
These quantities weight each classifier invocation equally.

\FloatBarrier
\begin{table}[H]
\centering
\caption{Accuracy and diagnostic workload in the hospital-network experiment.
Large share and peak share are percentages of all visited-classifier calls; large calls/100 is the absolute number of large-hospital calls per 100 arrivals.
Accuracy uses history-weighted predictions.
Entries are mean $\pm$ sample SD over seeds 2025--2027; \texttt{EdgeDRM-NR} workload quantities are fixed by uniform entry.}
\label{tab:hospital-traffic-details}
\scriptsize
\setlength{\tabcolsep}{3.5pt}
\begin{tabular*}{\textwidth}{@{\extracolsep{\fill}}lrrrrr}
\toprule
Condition & Accuracy (\%) & Calls/query & Large share (\%) & Large calls/100 & Peak share (\%) \\
\midrule
Layered: \texttt{EdgeDRM-NR} & $45.29\pm1.80$ & $1$ & $25$ & $25$ & $3.125$ \\
Layered: History, no prior & $45.74\pm1.60$ & $2.01\pm0.03$ & $19.55\pm0.44$ & $39.22\pm1.56$ & $5.04\pm0.44$ \\
Layered: History + prior & $47.36\pm3.06$ & $2.61\pm0.01$ & $52.02\pm0.12$ & $135.52\pm0.63$ & $7.14\pm0.20$ \\
\midrule
Shortcut: \texttt{EdgeDRM-NR} & $45.49\pm2.03$ & $1$ & $25$ & $25$ & $3.125$ \\
Shortcut: History, no prior & $46.12\pm1.73$ & $2.01\pm0.02$ & $20.36\pm0.33$ & $40.96\pm0.39$ & $4.61\pm0.59$ \\
Shortcut: History + prior & $47.93\pm2.53$ & $2.46\pm0.01$ & $69.49\pm0.10$ & $170.81\pm0.81$ & $9.77\pm0.18$ \\
\bottomrule
\end{tabular*}
\end{table}

\FloatBarrier

\subsection{CIFAR Training, Evaluation, and Additional Results}
\label{app:frozen-readouts}

The balanced-endpoint schedule below applies to the benchmark and hospital protocol.
The prediction-rule comparison uses CIFAR-100/Dirichlet-$0.3$, and the sensitivity sweep uses CIFAR-10/Dirichlet-$0.3$. Both use eight clients and seeds 2025--2027.
Each primary model uses its saved 200-round head and router checkpoint with the original 800-round encoder state.
The same continuation length is used for every graph and seed.
Evaluation preserves the training random-number and loader states.

\subsubsection{Training schedule and implementation details}
\label{app:frozen-training-protocol}
The routing loss assigns equal weight to the two endpoint means of every edge.
At inference, the deployed incident score adds the single sample-count offset $\log(n_j/n_i)$.
The pretraining duration and subsequent encoder freezing are experimental optimization choices, rather than defining components of \texttt{EdgeDRM}.

\paragraph{Stage I: 800 shared-training rounds.}
All clients start from the same model initialization.
Each round takes local SGD steps on the task loss and then mixes the full model $\psi_i=(\theta_i,\omega_i)$ by \eqref{eq:generic-decentralized-update}.
Both representation and classifier are exchanged, using only physical neighbors.
Thus \emph{shared} refers to full-model collaboration toward a common task objective, not a server-owned model or identical local parameters.
This stage uses the same graph and full-model mixing as the matched \texttt{DFedAvgM} control \citep{sun2023dfedavgm}.

\paragraph{Stage II: 200 rounds with frozen local encoders.}
Each expert inherits its own Stage-I model and sampler state, fixes $\bar\theta_i=\theta_i^{(800)}$, and initializes its routing potential $a_i=(\vartheta_i,b_i)$ to zero.
The encoder includes the backbone, feature projection, and feature normalization.
Its parameters and persistent buffers remain unchanged, and it runs in evaluation mode.
There are no encoder gradient updates or representation-mixing steps:
\begin{equation*}
\theta_i^{(800+r)}=\bar\theta_i,\qquad r=0,\ldots,200.
\end{equation*}
Each local step minimizes \eqref{eq:edgedrm-local-objective} only over the private classifier $\omega_i$ and the endpoint potentials.
Classifier heads are not exchanged, and potentials are not averaged.
Freezing keeps each local feature map stationary during this stage; it does not align different experts' feature maps.
The fitted representation used at inference is $\widehat\theta_i=\bar\theta_i$ under this experimental protocol.

The head and routing-potential updates instantiate Stage II of Section~\ref{sec:decentralized-training}, with no parameter mixing.
Routing gradients do not flow into the encoder or task head.
Training counts are exchanged before coefficient fitting.
After round 200, final potentials are published to neighbors.
All parameters remain fixed during inference, and no additional consensus phase is applied.

\begin{algorithm}[H]
\caption{Experimental 800+200 training schedule}
\label{alg:frozen-experimental-schedule}
\begin{algorithmic}[1]
\REQUIRE Graph $G$, neighbor weights $W$, local steps $B_{\rm loc}$, $\lambda$;
$R_{\rm sh}=800$, $R_{\rm per}=200$
\STATE Initialize all local task models identically
\FOR{$r=1,\ldots,R_{\rm sh}$}
  \STATE Each client takes $B_{\rm loc}$ task-only SGD steps
  \STATE Mix full models with physical neighbors using $W$
\ENDFOR
\STATE Inherit local models and sampler states; set $\bar\theta_i\leftarrow\theta_i^{(800)}$
\STATE Freeze encoder parameters and buffers in evaluation mode; initialize owner potentials to zero and exchange training counts along edges
\FOR{$r=1,\ldots,R_{\rm per}$}
  \FOR{$\ell=1,\ldots,B_{\rm loc}$}
    \STATE Owners send potential snapshots to incident neighbors
    \STATE Each endpoint computes local features with its frozen encoder
    \STATE Differentiate task loss for its private head and balanced endpoint losses for endpoint potentials
    \STATE Return neighbor-owned potential gradients along their edges
    \STATE Update only private heads and owner potentials with accumulated gradients
  \ENDFOR
\ENDFOR
\STATE Publish final potentials to neighbors; freeze heads and routers
\end{algorithmic}
\end{algorithm}

\paragraph{Fixed budget and learning-rate schedule.}
For the CIFAR protocol, each round has five local mini-batches of size 128.
SGD uses momentum $0.9$, reset each round, and weight decay $5\times10^{-4}$.
The phase-local learning rate is
\begin{equation}
\eta_r^{\rm exp}=0.005\{1+\cos(\pi(r-1)/H_{\rm lr})\},
\qquad
H_{\rm lr}=800\text{ in Stage I},\quad H_{\rm lr}=600\text{ in Stage II}.
\label{eq:two-stage-lr}
\end{equation}
Stage II restarts at $0.01$ but stops at round 200, with $\eta_{200}^{\rm exp}\simeq0.00752264$; it does not run to the end of the 600-round decay horizon.
The encoder remains frozen throughout this phase; the schedule applies only to heads and routing potentials.
The model at total round 1000 is the fixed endpoint, without test-based early stopping.
The finite neural optimizer with weight decay is distinct from the unpenalized fixed-representation estimator in Section~\ref{sec:theory}.

This schedule keeps local features stationary during coefficient fitting, but does not make different clients' representations identical or independent of the reused training data.
It therefore does not by itself satisfy the statistical assumptions of Theorem~\ref{thm:parameter-estimation}.

\subsubsection{Fixed-expert referral-score ablation}
The comparison in Figure~\ref{fig:referral-policy-ablation} reuses the final CIFAR-10 and CIFAR-100 Dirichlet-$0.3$ \textsc{RR4} checkpoints for all three seeds.
No model parameter is updated.
\texttt{EdgeDRM} routing uses the deployed incident score, including the single $\log(n_j/n_i)$ offset.
\texttt{EdgeDRM-NR} reports the same entry expert without referral.
Count-only replaces the learned part of every incident score by zero, leaving exactly $\log(n_j/n_i)$, and retains the learned rule's strict zero threshold.
Random routing samples uniformly from the unvisited incident neighbors and stops only at the two-move budget or when none remain; its seed is fixed before evaluation.
All three routing rules exclude revisits.
The primary prediction is the terminal expert's classifier, which changes only through the selected route.

\subsubsection{Sensitivity of $\lambda$}
There are three independent training seeds; graphs reuse each seed's partition.
Validation/test query counts are 3,600/14,403 for seed 2025, 3,603/14,400 for seed 2026, and 3,601/14,402 for seed 2027.
All prediction rules and controls use matched query indices, labels, source IDs, entry nodes, and shared-checkpoint hashes.
Accuracy is measured over the fixed non-source arrivals, with one view per query.

\FloatBarrier
\begin{table}[H]
\centering
\caption{Task--routing weight on CIFAR-10/Dirichlet-$0.3$, eight clients, \textsc{RR4}, and $L=2$.
All branches reuse the same seed-specific 800-round prefix and fit heads and routers for 200 rounds.
Accuracy is in percent, referral gain in percentage points, and calls per query.
Entries are final-checkpoint means $\pm$ sample SD across seeds 2025--2027.}
\label{tab:lambda-tradeoff}
\scriptsize
\setlength{\tabcolsep}{5.0pt}
\begin{tabular*}{\textwidth}{@{\extracolsep{\fill}}lrrrr}
\toprule
$\lambda$ & \texttt{EdgeDRM} & \texttt{EdgeDRM-NR} & Referral gain & Calls \\
\midrule
$0.10$ & $52.11\pm2.19$ & $35.01\pm3.55$ & $+17.11\pm3.98$ & $2.16\pm0.03$ \\
$0.25$ & $51.76\pm2.04$ & $30.29\pm4.06$ & $+21.46\pm4.99$ & $2.16\pm0.03$ \\
$0.40$ & $51.57\pm2.03$ & $28.36\pm4.23$ & $+23.21\pm5.36$ & $2.15\pm0.03$ \\
$0.50$ & $51.51\pm2.04$ & $27.48\pm4.32$ & $+24.03\pm5.60$ & $2.15\pm0.03$ \\
$0.60$ & $51.26\pm2.04$ & $26.75\pm4.40$ & $+24.51\pm5.71$ & $2.14\pm0.03$ \\
$0.75$ & $50.95\pm2.09$ & $25.96\pm4.49$ & $+24.99\pm5.99$ & $2.13\pm0.04$ \\
$0.90$ & $50.12\pm2.21$ & $25.35\pm4.54$ & $+24.77\pm6.35$ & $2.11\pm0.05$ \\
$0.95$ & $49.30\pm2.33$ & $25.18\pm4.56$ & $+24.12\pm6.51$ & $2.09\pm0.06$ \\
$1.00$ & $30.65\pm2.81$ & $24.97\pm4.54$ & $+5.68\pm2.00$ & $1.93\pm0.28$ \\
\bottomrule
\end{tabular*}
\end{table}

\begin{table}[H]
\centering\small
\caption{Prediction-rule comparison and inference costs using identical frozen models and paths.
Accuracy is in percent (three-seed mean $\pm$ sample SD).
Hard terminal uses the final visited expert; uniform and weighted history use only visited predictions.
Edge scores and classifier calls are mean logical operations per query, not measured latency.}
\label{tab:frozen200-inference}
\begin{tabular}{lrrrrr}
\toprule
Graph & Hard terminal & Uniform history & Weighted history & Calls & Incident scores \\
\midrule
\textsc{Path} & $11.09\pm0.60$ & $10.43\pm0.79$ & $\mathbf{11.19\pm0.58}$ & 1.72 & 2.35 \\
\textsc{Cycle} & $12.29\pm0.63$ & $11.62\pm0.79$ & $\mathbf{12.40\pm0.63}$ & 1.81 & 2.81 \\
\textsc{RR4} & $17.28\pm0.55$ & $15.62\pm0.47$ & $\mathbf{17.59\pm0.59}$ & 2.10 & 6.56 \\
\textsc{Complete} & $\mathbf{20.05\pm0.55}$ & $17.71\pm0.67$ & $20.02\pm0.63$ & 1.94 & 12.66 \\
\bottomrule
\end{tabular}
\end{table}

\FloatBarrier

The sweep in Table~\ref{tab:lambda-tradeoff} is a sensitivity analysis rather than test-set model selection.
Referral remains effective across the nonzero routing-loss settings, whereas removing the routing term at $\lambda=1$ weakens performance.

Table~\ref{tab:frozen200-inference} compares prediction rules on identical routes.
Weighted history achieves the highest mean accuracy on the sparse graphs and is comparable to hard-terminal prediction on \textsc{Complete}.

\subsection{ISIC2019: Data, Protocols, and Additional Results}
\label{app:isic2019}

The study in Section~\ref{sec:isic2019} uses a fixed 400-round shared-training prefix followed by 50 rounds of balanced-endpoint frozen-encoder adaptation, with three training seeds.

\subsubsection{Shared data, preprocessing, and graphs}
\paragraph{Data provenance and center identities.}
We use the 23,247 images and six source-center assignments in the ISIC2019 split distributed with FLamby~\citep{NEURIPS2022_232eee8e}.
Images come from the public ISIC2019 training collection, including BCN20000, HAM10000, and MSK contributions \citep{Hern_ndez_P_rez_2024,Tschandl_2018,codella2018skinlesionanalysismelanoma}.
In particular, the evaluation set here is FLamby's held-out subset of that collection, not the separate ISIC challenge test collection.
Center identities follow source and acquisition metadata and should not be interpreted as six independent hospitals.
The \texttt{FedSPM} package~\citep{wang2026fedspm} serves only as a reference for the data input pipeline; we do not use its within-server subpopulation model or its pooled resplitting procedure.
Each observed center is represented by one expert.

\paragraph{Leakage control and fixed split.}
The released FLamby partition contains 18,597 training and 4,650 test images.
Our metadata audit finds 2,302 lesion identifiers shared across these folds.
We preserve every test image and label and remove the 6,102 training images whose lesion identifier also occurs in the test fold.
We additionally link lesion identifiers through identical original or preprocessed image files and one visually confirmed near-duplicate pair.
These links are computed globally, including across centers; groups with conflicting labels are excluded from the training candidates.
The image-level audit removes four additional training candidates.
A deterministic grouped split (seed 42) targets a 20\% validation share within center--class cells, while reserving at least one training group for each represented cell.
The resulting counts are given in Table~\ref{tab:isic2019-split}.
Training, validation, and test are disjoint in the audited lesion/duplicate groups.
The grouping metadata do not establish patient-level independence, and the near-duplicate review is not an exhaustive guarantee against all transformed duplicates.
This modified split is not directly comparable to the unmodified FLamby leaderboard.

\par\medskip\noindent\begin{minipage}{\linewidth}
\centering
\captionsetup{hypcap=false}
\captionof{table}{Fixed ISIC2019 split after global lesion and identified duplicate grouping.
Excluded images come from the original training fold.
The $n_i$ used for the deployed sample-count offset are the final training counts.}
\label{tab:isic2019-split}
\small
\begin{tabular}{rlrrrr}
\toprule
Node & FLamby source & Train & Validation & Test & Excluded \\
\midrule
1 & BCN & 3,802 & 954 & 2,483 & 5,174 \\
2 & vidir\_molemax & 2,528 & 632 & 791 & 3 \\
3 & vidir\_modern & 1,686 & 424 & 672 & 581 \\
4 & rosendahl & 1,221 & 308 & 452 & 278 \\
5 & MSK4 & 520 & 131 & 164 & 4 \\
6 & vienna\_dias & 229 & 56 & 88 & 66 \\
\midrule
 & Total & 9,986 & 2,505 & 4,650 & 6,106 \\
\bottomrule
\end{tabular}

\end{minipage}\par\medskip

\paragraph{Images and model.}
We resize each RGB image with bilinear interpolation so that its shorter side is 224 pixels, apply the FLamby shades-of-gray color-constancy transform with power six, and save a JPEG with quality 75.
Training uses a random $200\times200$ crop, independent horizontal and vertical flips, rotation within $\pm50$ degrees, and brightness/contrast jitter of 0.15/0.1.
Validation and test use a deterministic center crop.
All roles use ImageNet channel normalization.
These torchvision transforms are shared by all methods; we do not claim exact reproduction of FLamby's augmentation code.
The model is the existing \texttt{FedDRM} GroupNorm ResNet-18 with its stride-one stem, a $512\to256$ projection with LayerNorm/ReLU, and a private eight-class linear head.
Within each seed, all six experts and all methods start from the same random initialization; no pretrained weights are used.
\texttt{EdgeDRM} adds a 256-dimensional slope and scalar intercept per node.

\paragraph{Communication graphs.}
The four undirected graphs---\textsc{Path}, \textsc{Cycle}, \textsc{RR3}, and \textsc{Complete}---are fixed before training.
\textsc{Path} has edges $\{i,i+1\}$ for $i=1,\ldots,5$, and \textsc{Cycle} adds the edge $\{1,6\}$.
The intermediate regular graph is
\begin{equation*}
E(\textsc{RR3})
=\{\{1,2\},\{1,5\},\{1,6\},\{2,3\},\{2,4\},\{3,5\},\{3,6\},\{4,5\},\{4,6\}\},
\end{equation*}
\textsc{Complete} contains all pairs.
Their edge counts are 5, 6, 9, and 15 and their diameters are 5, 3, 2, and 1.
We use symmetric Metropolis mixing, with $W_{ij}=1/(1+\max\{\deg(i),\deg(j)\})$ on an edge and the diagonal completing each row to one.
Graph diameter alone does not ensure that the positive-gain, no-revisit referral rule reaches a globally best expert.

\subsubsection{Training Details}
\label{app:isic2019-current-protocol}

\paragraph{Shared-training prefix.}
Each graph uses six identically initialized local models, three training seeds, and 400 rounds of task-only SGD followed by full-model Metropolis mixing.
There are ten local optimizer steps per round, effective batch size 32, and microbatch size 16.
The initial learning rate is 0.03 with a 400-round cosine horizon; SGD momentum is 0.9, reset each round, and weight decay is $5\times10^{-4}$.
The \textsc{RR3} prefix reuses the unchanged final checkpoint of the earlier \texttt{DFedAvgM} run, before its evaluation-only consensus.
The other graphs require their own task-only prefixes; old joint-training \texttt{EdgeDRM} checkpoints are not substituted for them.
The comparison uses fixed round 400, not the historical validation-selected checkpoint.

\paragraph{Frozen adaptation.}
The backbone, feature projection, feature normalization, and their buffers are fixed to the prefix checkpoint and evaluated in inference mode.
Private classifier heads inherit their prefix values, and all node routing potentials start at zero.
For 50 rounds, five local mini-batches per round update only the heads and balanced-endpoint routing potentials, using equal endpoint weights and $\lambda=0.9$.
Microbatch gradients are weighted by their fractions of the full batch.
Endpoint owners accumulate both local and neighbor contributions from a common snapshot before updating.
No encoder or head mixing is performed, and no extra consensus phase is added.
The head/router learning rate starts at 0.01 and uses \eqref{eq:two-stage-lr} with horizon 600, stopping at round 50; momentum, weight decay, and batch sizes are as above.
Training retains random image augmentation despite the frozen representation parameters.

\paragraph{Selection, inference, and budget interpretation.}
Validation readouts are saved at adaptation rounds 0, 10, 25, and 50.
Within each topology, test comparisons use the fixed prefix and round-50 endpoint without selecting a model from the test scores.
The main table reports \textsc{Path}, \textsc{Cycle}, \textsc{RR3}, and \textsc{Complete}; topology comparisons are descriptive rather than a preregistered test.
The full prefix uses 4,000 local optimizer steps per client; adaptation adds 250 head/router steps.
Thus comparison with \texttt{DFedAvgM400} estimates the change from adding adaptation and referral to that prefix, and does not establish superiority at equal total training effort.
The adapted entry, hard terminal, uniform history, and score-weighted history controls use identical final model parameters and matched queries.
The graph, split, loss weights, and inference protocol are fixed before each run.
Each graph has three fixed training seeds.

\paragraph{Metric and access checks.}
For a prediction rule $\widehat y(x,s)$ and the test images $\mathcal T_k$ of class $k\in[8]$, the uniform-entry metric is
\[
\operatorname{BA}(\widehat y)
=\frac18\sum_{k=1}^8\frac1{6|\mathcal T_k|}
 \sum_{(x,y)\in\mathcal T_k}\sum_{s=1}^6
 \mathbf1\{\widehat y(x,s)=y\}.
\]
For the primary rule, $\widehat y$ is the argmax of the visited-history probability mixture.
Each of 4,650 unique test images contributes six entry records, for 27,900 records per model.
These records are paired comparisons of repeated images, not independent test subjects.
The deployed router adds the single sample-count offset $\log(n_j/n_i)$ to the balanced-endpoint logit, and the cumulative history weights use the resulting scores without another correction.
Its zero threshold and local frozen representations do not establish the additional statistical or routing assumptions in Section~\ref{sec:theory}.

\end{document}